\def\Rset{\mathbb{R}}
\def\Hset{\mathbb{H}}
\DeclareMathOperator*{\E}{\rm E}
\DeclareMathOperator{\rank}{rank}
\providecommand{\iprod}[2]{\left\langle #1 , #2 \right\rangle}
\newcommand{\mat}[1]{\mathbf{#1}}
\newcommand{\tmtextbf}[1]{{#1}}
\newcommand{\bu}{\mat{u}}
\newcommand{\bv}{\mat{v}}
\newcommand{\bz}{\mat{z}}
\newcommand{\bw}{w}
\newcommand{\bPhi}{{\Phi}}
\newcommand{\Mu}{{\boldsymbol \mu}}
\newcommand{\ssigma}{{\boldsymbol \sigma}}
\newcommand{\gapstar}{\Delta_r}
\newcommand{\wt}{\widetilde}
\newcommand{\h}{\widehat}
\newcommand{\ov}{\overline}
\newcommand{\R}{\mathfrak{R}}
\newcommand{\cD}{{\mathcal D}}
\newcommand{\cX}{{\mathcal X}}
\newcommand{\ovK}{\overline{\mat{K}}}
\newcommand{\bK}{\mat{K}}
\newcommand{\eigengap}{\gapstar}
\newcommand{\muset}{{\mathcal M}}
\newcommand{\nuset}{{\mathcal N}}
\newcommand{\supmu}{\sup_{\substack{\Mu \in \muset}}}
\newcommand{\supmuprime}{\sup_{\substack{\Mu \in \nuset}}}
\newcommand{\supmulone}{\sup_{\substack{\| \Mu \|_{1} \leq 1}}}
\newcommand{\msubdelta}{\kappa}
\newcommand{\kyfan}{{\Lambda_{(r)}}}
\newcommand{\oneovermu}{{\nu}}
\newcommand{\m}{u}
\newcommand{\x}{x'}
\newcommand{\set}[1]{\{#1\}}
\newcommand{\ignore}[1]{}
\newtheorem{theorem}{Theorem}[section]
\newtheorem{lemma}[theorem]{Lemma}
\newtheorem{proposition}[theorem]{Proposition}
\newenvironment{proof}[1][Proof]{\begin{trivlist}
\item[\hskip \labelsep {\bfseries #1}]}{\end{trivlist}}
\newcommand{\qed}{\nobreak \ifvmode \relax \else
      \ifdim\lastskip<1.5em \hskip-\lastskip
      \hskip1.5em plus0em minus0.5em \fi \nobreak
      \vrule height0.75em width0.5em depth0.25em\fi}
\begin{document}

% If your paper is accepted and the title of your paper is very long,
% the style will print as headings an error message. Use the following
% command to supply a shorter title of your paper so that it can be
% used as headings.
%
%\runningtitle{I use this title instead because the last one was very long}

% If your paper is accepted and the number of authors is large, the
% style will print as headings an error message. Use the following
% command to supply a shorter version of the authors names so that
% they can be used as headings (for example, use only the surnames)
%
%\runningauthor{Surname 1, Surname 2, Surname 3, ...., Surname n}

\twocolumn[

\aistatstitle{Foundations of Coupled Nonlinear Dimensionality Reduction}

%\aistatsauthor{ Mehryar Mohri \And Afshin Rostamizadeh \And Dmitry Storcheus }

%\aistatsaddress{ Courant Institute \And Google Research \And Google Research } 
\aistatsauthor{ Mehryar Mohri \And Afshin Rostamizadeh \And Dmitry Storcheus}

\aistatsaddress{Courant Institute and Google Research \And Google Research \And Google Research }]

\begin{abstract}

  In this paper we introduce and analyze the learning scenario of \emph{coupled nonlinear dimensionality reduction}, which combines two major steps of machine learning pipeline: projection onto a manifold and subsequent supervised learning. First, we present new
  generalization bounds for this scenario and, second, we introduce an algorithm that follows from these bounds. The generalization error bound is based on a careful analysis
  of the empirical Rademacher complexity of the relevant hypothesis
  set. In particular, we show an upper bound on the Rademacher
  complexity that is in $\wt O(\sqrt{\kyfan/m})$, where $m$ is the
  sample size and $\kyfan$ the upper bound on the Ky-Fan $r$-norm of
  the associated kernel matrix. We give both upper and lower
  bound guarantees in terms of that Ky-Fan $r$-norm, which strongly
  justifies the definition of our hypothesis set. To the best of our
  knowledge, these are the first learning guarantees for the problem
  of coupled dimensionality reduction. Our analysis and learning
  guarantees further apply to several special cases, such as that
  of using a fixed kernel with supervised dimensionality reduction or
  that of unsupervised learning of a kernel for dimensionality
  reduction followed by a supervised learning algorithm. Based on theoretical analysis, we suggest a structural risk minimization algorithm consisting of the coupled fitting of a low dimensional manifold and a separation function on that manifold. 

\end{abstract}

\section{Introduction}

Classic methods of linear dimensionality reduction assume that data approximately follows some low-dimensional linear subspace and aim at finding an optimal projection onto that subspace, i.e. Principle Component
Analysis (PCA) \citep{pearson1901} and Random Projection \citep{hegde2008random}. Nonlinear dimensionality reduction, also referred to as  \emph{manifold learning}, is a generalization of those linear techniques that aims at fitting a nonlinear low dimensional structure. Such manifold learning methods as Isometric Feature Mapping
\citep{tenenbaum2000global}, Locally Linear Embedding \citep{roweis2000nonlinear}, and Laplacian Eigenmap \citep{belkin2001laplacian} are widely used as methods of nonlinear dimensionality reduction in machine learning,
either to reduce the computational cost of working in
higher-dimensional spaces, or to learn or approximate a manifold more
favourable to subsequent learning tasks such as classification or
regression. These algorithms seek to determine a nonlinear lower dimensional
space by preserving various geometric properties of the input. However,
it is not clear which of these properties would be more beneficial to
the later discrimination stage. Since they are typically unsupervised
techniques, they present a certain risk for the later classification
or regression task: the lower-dimensional space found may not be the
most helpful one for the second supervised learning stage and, in
fact, in some cases could be harmful. How should we design
manifold construction techniques to benefit most the subsequent
supervised learning stage?

As shown by  Figure \ref{fig:example}, simply optimizing geometric
properties may be detrimental the subsequent learning stage. To
solve this problem, we consider an alternative scenario where the
manifold construction step is not carried out \emph{blindly}. We couple the task
of nonlinear dimensionality reduction with the subsequent supervised learning
stage. To do so, we make use of the known remarkable result that all
of the manifold learning techniques already mentioned and many
others are specific instances of the generic Kernel PCA (KPCA)
algorithm for different choices of the kernel function
\citep{ham2004kernel}.  More generally, all these methods can be
thought of first mapping input vectors into a reproducing kernel
Hilbert space and then conducting a low-rank projection within that
space. Thus, our goal is to both learn a mapping as well as a
projection taken from a parametric family as well as a hypothesis
which is found in the low-dimensional space.

The main purpose of this paper is precisely to derive learning
guarantees for this scenario, which we coin as \emph{Coupled Nonlinear Dimensionality Reduction}, and to use those guaranteed as guidelines in the design
of algorithms.

In practice, a user will often use a handful of different kernel
functions and choose the one that is most effective according to
measurements on a validation dataset. Instead, in this work, we argue
that a more effective method is to allow a learning algorithm itself
to choose a kernel function from a parametrized class.  The idea of
automatically selecting a kernel function has been explored in context
of learning algorithms such as Support Vector Machines (SVM)
\citep{lanckriet2004} and Kernel Ridge Regression (KRR)
\citep{cortes2009} (see \citep{gonen2011} and references therein for a
more complete survey). To define the feature mapping,
we will consider kernel families that consist of linear combinations
of fixed \emph{base kernel functions}. Such linear families have been
analyzed extensively in the literature \citep{cortes2010,kloft2011},
however, mainly in the context of kernelized learning algorithms
rather than dimensionality reduction techniques. 
Similarly, to define the projection, we will make use of the top-$r$
eigenspace of a covariance operator that is defined as the linear
combination of the covariances operators of the weighted base kernels.
While some recent
work has considered kernel learning in the setting of dimensionality
reduction \citep{lin2011multiple}, to the best of our knowledge there
has been no theoretical analysis or theoretical justification for the
proposed algorithms. In this work, we provide the necessary
theoretical analysis.

As mentioned above, within the setting of
machine learning, dimensionality reduction is primarily used as a
pre-processing step before regression/classification. For example, the recent work
\citep{dhillon2013risk} illustrates the benefit of dimensionality reduction as preprocessing step by comparing the risk of OLS
regression on reduced data to the risk of ridge regression on full
data. They conclude that the risk of PCA-OLS is at most a constant
factor of the risk of ridge regression, but can often be much less,
as shown empirically.

\begin{figure}[t]
\centering
\scalebox{.5}{\begin{tikzpicture}[scale=0.5]
       \node (a) at (0,0) [xshift=-1cm,label={[label distance=-5.5cm]90:a) original data}]
         {
\begin{tikzpicture}
% The axes
\draw[help lines,->,line width=1.5pt, color=black, opacity=0] (-3,-3) -- (3,-3) coordinate (xaxis);
\draw[help lines,->,line width=1.5pt,color=black,opacity=0] (-3,-2) -- (-3,2) coordinate (yaxis);

\draw[step=1cm,gray,very thin, dashed] (-3,-2) grid (3,2);

\draw[fill=blue,color=blue] (2,1) circle (0.2cm);
\draw[fill=blue,color=blue] (-2,1) circle (0.2cm);

\draw[fill=red,color=red] (2,-1) circle (0.2cm);
\draw[fill=red,color=red] (-2,-1) circle (0.2cm);

\draw[help lines, ->, line width=1.5pt,postaction=decorate, color=black] (0,0)--(0,1);
\draw[help lines,->, line width=1.5pt,postaction=decorate, color=black] (0,0)--(1,0);
\node[above] at (0.8,0.1) {$\bv_{1}$};
\node[left] at (-0.1,0.7) {$\bv_{2}$};

\foreach \x in {-2,-1,0,1,2}
    \draw (\x,-2.0) -- (\x,-2.1) node[anchor=north] {$\x$};
    
    \foreach \y in {-2,-1,0,1,2}
    \draw (-3.0,\y) -- (-3.1,\y) node[anchor=east] {$\y$};
\end{tikzpicture}
         };
        \node (b) at (a.east) [anchor=east,xshift=8.0cm,label={[label distance=-5.5cm]90:b) reduced data}]
         {
            \tdplotsetmaincoords{60}{125}
\begin{tikzpicture}
% The axes
\draw[help lines,->,line width=1.5pt, color=black, opacity=0] (-3,-3) -- (3,-3) coordinate (xaxis);
\draw[help lines,->,line width=1.5pt,color=black,opacity=0] (-3,-2) -- (-3,2) coordinate (yaxis);

\draw[step=1cm,gray,very thin, dashed] (-3,-2) grid (3,2);

\draw[fill=blue,color=blue, opacity=0.7] (2.1,0) circle (0.2cm);
\draw[fill=blue,color=blue, opacity=0.7] (-2.1,0) circle (0.2cm);

\draw[fill=red,color=red] (1.9,0) circle (0.2cm);
\draw[fill=red,color=red] (-1.9,0) circle (0.2cm);

\draw[help lines, ->, line width=1.5pt,postaction=decorate, color=black] (0,0)--(0,1);
\draw[help lines,->, line width=1.5pt,postaction=decorate, color=black] (0,0)--(1,0);
\node[above] at (0.8,0.1) {$\bv_{1}$};
\node[left] at (-0.1,0.7) {$\bv_{2}$};

\foreach \x in {-2,-1,0,1,2}
    \draw (\x,-2.0) -- (\x,-2.1) node[anchor=north] {$\x$};
    
    \foreach \y in {-2,-1,0,1,2}
    \draw (-3.0,\y) -- (-3.1,\y) node[anchor=east] {$\y$};
\end{tikzpicture}
         };

         %arrows
       %\draw [->] (a)--(b);
       %\draw [->] (c)--(b);
      % \draw[-open triangle 90] (a) -- node[auto] {} (b);
      
\end{tikzpicture}}
\captionsetup{singlelinecheck=off}
\caption[.]{An example which illustrates that preserving only geometric
properties may be detrimental to learning on reduced data. The
original data in (a) has four points from blue and red classes. The
eigenvectors of the covariance matrix are $\bv_{1} = \left(
\begin{smallmatrix} 1\\ 0 \end{smallmatrix} \right)$ and $\bv_{2} =
\left( \begin{smallmatrix} 0\\ 1 \end{smallmatrix} \right)$.
Performing standard rank 1 PCA will project both blue and red points
onto $\bv_{1}$, thus merging them (as plotted in (b)). Any
classification on the reduced data will necessarily incur a
classification error of at least $1/2$.}
\label{fig:example}
\end{figure}
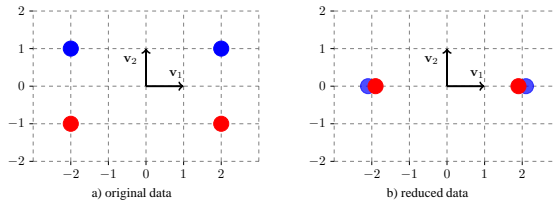

Perhaps unsurprisingly, several empirical investigations have shown
that tuning a dimensionality reduction algorithm in a coupled
fashion, i.e.\ taking into account the learning algorithm that will
use the reduced features, results in considerably better performance
on the learning task \citep{fukumizu2004dimensionality},
\citep{gonen2014coupled}.  Despite this, the vast majority of existing
theoretical analyses of dimensionality reduction techniques (even with
fixed kernel functions) do not directly consider the learning
algorithm that it will be used in conjunction with, and instead focus
on the optimization of surrogate metrics such as maximizing the
variance of the projected features \citep{zwald2006convergence}. One
exception is the work of \citep{mosci2007}, which provides a
generalization guarantee for hypotheses generated by first conducting
KPCA with a fixed kernel and then coupling with a regression model
that minimizes squared loss. There is also a recent work of
\citep{gottlieb2013adaptive}, which derives generalization bounds based
on Rademacher complexity for learning Lipschitz functions in a general
metric space. They show that the intrinsic dimension of data
significantly influences learning guarantees by bounding the
corresponding Rademacher complexity in terms of dimension of
underlying manifold and the distortion of training set relative to
that manifold.

In our setting, we consider hypotheses which include both the KPCA
dimensionality reduction step, with a \emph{learned} linear combination
kernel, as well as a linear model which uses the reduced features for
a supervised learning task.

Although the hypothesis set we analyze is most naturally associated to
a ``Coupled Nonlinear Dimensionality Reduction'' algorithm, which jointly selects both a kernel
for nonlinear projection as well as a linear parameter vector for
a supervised learning task, we note that this hypothesis set also
encompasses algorithms that proceed in two stages, i.e.\ by first
selecting a manifold and then learning a
linear model on it.

The results of this paper are organized as follows: in the following
section we outline the learning scenario, including the hypothesis
class, regularization constraints as well as define notation.
Section~\ref{sec:generalization} contains our main result, which is an
upper bound on the sample Rademacher complexity of the proposed
hypothesis class that also implies an upper bound on the
generalization ability of the hypothesis class. In
Section~\ref{sec:lower bound} we show a lower bound on the sample
Rademacher complexity as well as other quantities, which demonstrates
a necessary dependence on several crucial quantities and helps to
validate the design of the suggested hypothesis class. In
Section~\ref{sec:discussion} we provide a short discussion of the
implications of our theoretical results, which leads us to Section~\ref{sec:algorithm}, where develop an algorithm for the coupled fitting of a kernel and a separation function.

\section{Learning scenario}
\label{sec:Learning scenario}

\begin{figure}[t]
\centering
\scalebox{.7}{  \tikzstyle{startstop} = [rectangle, rounded corners, minimum width=3cm, minimum height=1cm,text centered, draw=black, fill=red!30]
  
  \tikzstyle{io} = [trapezium, trapezium left angle=70, trapezium right angle=110, minimum width=3cm, minimum height=1cm, text centered, draw=black, fill=blue!30]
  
    \tikzstyle{sample} = [rectangle, minimum width=3cm, minimum height=2cm, text centered, draw=green!60!black, line width=2pt]

\tikzstyle{process} = [rectangle, minimum width=3cm, minimum height=1cm, text centered, draw=black, fill=orange!30]

\tikzstyle{processBlue} = [rectangle, minimum width=3cm, minimum height=1cm, text centered, draw=blue, line width=2pt]

\tikzstyle{processRed} = [rectangle, minimum width=3cm, minimum height=1cm, text centered, draw=red,line width=2pt]
\tikzstyle{processYellow} = [rectangle, minimum width=3cm, minimum height=1cm, text centered, draw=orange, line width=2pt]
\tikzstyle{processGreen} = [rectangle, minimum width=3cm, minimum height=1cm, text centered, draw=green, line width=2pt]
 \tikzstyle{startstopRed} = [rectangle, rounded corners, minimum width=3cm, minimum height=1cm,text centered, draw=red, line width=2pt]

\tikzstyle{decision} = [diamond, minimum width=3cm, minimum height=1cm, text centered, draw=black, fill=green!30]

\tikzstyle{samplepoint} = [circle, radius=1cm, draw=black]

\tikzstyle{arrow} = [thick,->,>=stealth]

\begin{tikzpicture}[node distance=2cm]
\node (input) [processBlue] {\begin{tabular}{c} Base kernels $K_{1}, \ldots, K_{p}$ \end{tabular}    };
\node (estimate1) [processYellow, below of=input] {$C_{S'}$};
\node (S') [sample, left of=estimate1, xshift=-3cm, label={[label distance=0.2cm]90:Unlabeled sample $S'$}] {};
\draw [arrow] (input) -- (estimate1);
\node(pt1)[samplepoint,below of =S', yshift=1.5cm, xshift=-1cm]{};
\node(pt1)[samplepoint,below of =S', yshift=2.0cm, xshift=0cm]{};
\node(pt1)[samplepoint,below of =S', yshift=2.3cm, xshift=1cm]{};
\node(pt1)[samplepoint,below of =S', yshift=2.7cm, xshift=-1cm]{};
\node (estimate2) [processBlue, below of=estimate1] {Compute $P^{r}(C_{S'})$};
\draw [arrow] (estimate1) -- (estimate2);

%\node (train) [process, below of=estimate2] {\begin{tabular}{c} Fit $\Mu$ %and $w$ by min Loss on $S$ \\ $\underset{\Mu,w}{\min}\frac{1}%{m}\sum_{i=1}^{m}L\Big(\langle w,P^{r}%(C_{S'})\Phi(x_{n})\rangle,y_{n}\Big)$ \end{tabular}   };

\node (train) [processRed, below of=estimate2] { Learn $\Mu$ and $\bw$ };

\node (S) [sample, right of=train, xshift=-7cm, label={[label distance=0.2cm]90:Labeled sample $S$}] {};
\node(pt1)[samplepoint,below of =S, yshift=1.5cm, xshift=-1cm, fill=red]{};
\node(pt1)[samplepoint,below of =S, yshift=2.0cm, xshift=0cm, fill=blue]{};
\node(pt1)[samplepoint,below of =S, yshift=2.3cm, xshift=1cm, fill=red]{};
\node(pt1)[samplepoint,below of =S, yshift=2.7cm, xshift=-1cm, fill=blue]{};
\draw [arrow] (S) -- (train);
\draw [arrow] (estimate2) -- (train);

%\node (pro1) [process, below of=in1] {Process 1};
%\node (dec1) [decision, below of=pro1] {Decision 1};
%\node (dec1) [decision, below of=pro1, yshift=-0.5cm] {Decision 1};
%\node (pro2a) [process, below of=dec1, yshift=-0.5cm] {Process 2a};
%\node (pro2b) [process, right of=dec1, xshift=2cm] {Process 2b};
%\node (out1) [io, below of=pro2a] {Output};
%\node (stop) [startstop, below of=out1] {Stop};
%\draw [arrow] (start) -- (in1);
%\draw [arrow] (in1) -- (pro1);
%\draw [arrow] (pro1) -- (dec1);
%\draw [arrow] (dec1) -- (pro2a);
%\draw [arrow] (dec1) -- (pro2b);
%\draw [arrow] (dec1) -- node[anchor=east] {yes} (pro2a);
%\draw [arrow] (dec1) -- node[anchor=south] {no} (pro2b);
%\draw [arrow] (pro2b) -- (pro1);
%\draw [arrow] (pro2a) -- (out1);
%\draw [arrow] (out1) -- (stop);
%\draw [arrow] (pro2b) |- (pro1);

\draw [arrow] (S') -- (estimate1);
\end{tikzpicture}}
\caption{Flow chart illustrating the Coupled Dimensionality Reduction learning scenario.}
\label{fig:learning-scenario}
\end{figure}
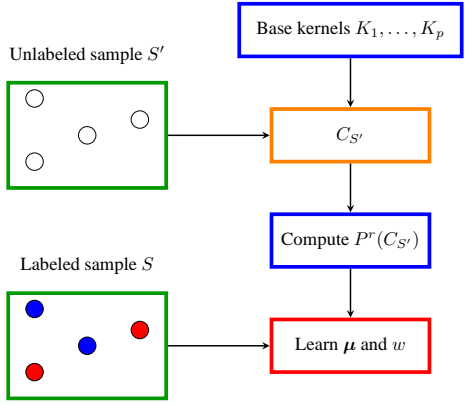

Here, we describe the learning scenario of supervised dimensionality
reduction. Let $\cX$ denote the input space.  We assume that the
learner receives a labeled sample of size $m$, $S = ((x_1, y_1),
\ldots, (x_m, y_m))$, drawn i.i.d.\ according to some distribution
$\cD$ over $\cX \times \set{-1, +1}$, as well as an unlabeled sample
$U = (\x_1, \ldots, \x_{\m})$ of size $\m$, typically with $\m \gg m$,
drawn i.i.d.\ according to the marginal distribution $\cD_\cX$ over
$\cX$.

We assume that the learner has access to $p$ positive-definite
symmetric (PDS) kernels $K_1, \ldots, K_{p}$. Instead of requiring the
learner to commit to a specific kernel $K$ defining KPCA with a
solution subsequently used by a classification algorithm, we consider
a case where the learner can define a dimensionality reduction
solution defined based on $\sum_{k = 1} \mu_k K_k$, where the
non-negative mixture weights $\mu_k$ are chosen to minimize the error
of the classifier using the result of the dimensionality reduction.

Given $p$ positive-definite symmetric (PDS) kernels $K_1, \ldots
,K_{p}$ and a vector $\Mu \in \Rset^{p}$ with non-negative
coordinates, consider a set of weighted kernel functions
$\{\mu_1K_1,...,\mu_{p}K_{p}\}$, where each $\mu_kK_k$ has its
reproducing space $\Hset_{\mu_kK_k}$.  The unlabeled sample $U$
is used to define the empirical covariance operator of each weighted
base kernel $\mu_kK_k$, denoted as $C_{U,\mu_kK_k}\colon
\Hset_{\mu_kK_k} \to \Hset_{\mu_kK_k}$.  Given a set of covariance
operators $\{C_{U,\mu_kK_k}\}_{k=1}^{p}$ we define an operator
$C_{U,\Mu} = \sum_{k=1}^{p}C_{U,\mu_kK_k}$ that acts on the
sum of reproducing spaces
$\Hset_{\Mu}=\Hset_{\mu_1K_1}+...+\Hset_{\mu_{p}K_{p}}$.

Let $P_{r}(C_{U,\Mu})$ denote the rank-$r$ projection over the
eigenspace of $C_{U,\Mu}$ that corresponds to the top-$r$
eigenvalues of $C_{U,\Mu}$ denoted as $\lambda_1(C_{U,\Mu})
\geq \ldots \geq \lambda_r(C_{U,\Mu})$.\footnote{
Although it is also possible provide results in an even more
general setting, we make the assumption that the chosen $r$ satisfies
$\lambda_{r}(C_{U,\Mu}) \neq \lambda_{r+1}(C_{U,\Mu})$ in
order to simplify the presentation. Note, assumption this is always
satisfied if the eigenvalues are simple.
}
Let $\bPhi_{\mu_kK_k} \colon
\cX \to \Hset_{\mu_kK_k}$ denote the feature mapping associated to
$\mu_k K_k$, specifically for each $x\in\cX$, we have the function
$\bPhi_{\mu_kK_k}(x) = \sqrt{\mu_k} K_k(x, \cdot)$. Define
$\bPhi_{\Mu}\colon \cX \to \Hset_{\Mu}$ as
$\bPhi_{\Mu} = \sum_{k=1}^{p}\bPhi_{\mu_kK_k}$. This parametrized
projection is used to define rank-$r$ feature vectors (functions)
$P_{r}(C_{U,\Mu}) \bPhi_{\Mu}(x)$.

From this point onward, in to avoid intricate notation, we will not
not explicitly indicate the dependence of $\Hset_\Mu$, $C_{U,
  \Mu}$ and $\bPhi_\mu$ on $\Mu$ and instead use $\Hset$, $C_{U}$
and $\bPhi$.  Similarly, we will refer to $C_{U,\mu_kK_k}$ as
$C_{U,k}$ and $\bPhi_{\mu_kK_k}$ as $\bPhi_k$. We will also use
the shorthand $\Pi_{U}$ (resp. $\Pi_S$) instead of
$P_{r}(C_{U})$ (resp.  $P_{r}(C_S)$).

Once a projection is defined, the labeled sample $S$ is used to learn
a linear hypothesis $x \mapsto \iprod{\bw}{\Pi_{U}
\bPhi(x)}_\Hset$ with bounded norm, $\| \bw \|_\Hset \leq 1$, in the
subspace of the projected features $\Pi_{U} \bPhi(x)$.

The two steps just described, dimensionality reduction by projection $\Pi_{U}$ and supervised learning of $\bw$,
may be coupled so that the best choice of weights $\Mu$ is
made for the subsequent learning of $\bw$.
(see Figure~\ref{fig:coupled}).

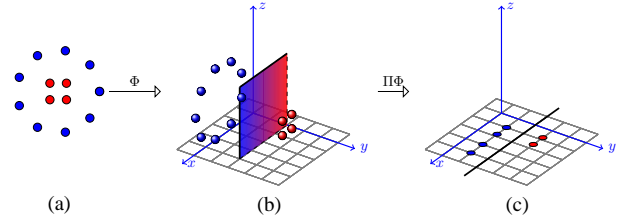
\begin{figure}[t]
\centering
\scalebox{.55}{\begin{tikzpicture}
       \node (a) at (0,0) [xshift=-1cm,label={[label distance=-4.3cm]90:\Large(a)}]
         {
\begin{tikzpicture}

	\foreach \x in {0,40,...,360}{
	\filldraw[fill=blue] ({cos(\x)},{sin(\x)}) circle (0.1cm);
}

\filldraw[fill=red] (0.2,0.2) circle (0.1cm);
\filldraw[fill=red] (-0.2,0.2) circle (0.1cm);
\filldraw[fill=red] (0.2,-0.2) circle (0.1cm);
\filldraw[fill=red] (-0.2,-0.2) circle (0.1cm);
\end{tikzpicture}
         };
        \node (b) at (a.east) [anchor=east,xshift=6.5cm,label={[label distance=-5.5cm]90:\Large(b)}]
         {
            \tdplotsetmaincoords{60}{125}
\begin{tikzpicture}[
		tdplot_main_coords,
		grid/.style={very thin,gray},
		axis/.style={->,blue,thick},
		cube/.style={dashed,left color=blue,right color=red, opacity=0.9},
		cube hidden/.style={very thick,dashed}]
	%draw a grid in the x-y plane
	\foreach \x in {-0.5,0,...,2.5}
		\foreach \y in {-0.5,0,...,2.5}
		{
			\draw[grid] (\x,-0.5) -- (\x,2.5);
			\draw[grid] (-0.5,\y) -- (2.5,\y);
		}

	%draw the axes
	\draw[axis] (0,0,0) -- (3,0,0) node[anchor=west]{$x$};
	\draw[axis] (0,0,0) -- (0,3,0) node[anchor=west]{$y$};
	\draw[axis] (0,0,0) -- (0,0,3) node[anchor=west]{$z$};

	\draw[cube] (2,1,0) -- (0,1,0) -- (0,1,2) -- (2,1,2) -- cycle;
	\draw[very thick] (0,1,2) -- (2,1,2);
	\draw[very thick] (2,1,2) -- (2,1,0);
	
	% one class
	
		\foreach \x in {0}{
	\shadedraw[ball color=blue, opacity=0.8] ({cos(\x)},-1,{sin(\x)}) circle (0.1cm);
}

		\foreach \x in {40}{
	\shadedraw[ball color=blue, opacity=0.8] ({cos(\x)},-1,{sin(\x)}) circle (0.1cm);
}

		\foreach \x in {80}{
	\shadedraw[ball color=blue, opacity=0.8] ({cos(\x)},-1,{sin(\x)}) circle (0.1cm);
}

		\foreach \x in {120}{
	\shadedraw[ball color=blue, opacity=0.8] ({cos(\x)},-1,{sin(\x)}) circle (0.1cm);
}

		\foreach \x in {160}{
	\shadedraw[ball color=blue, opacity=0.8] ({cos(\x)},-1,{sin(\x)}) circle (0.1cm);
}

		\foreach \x in {200}{
	\shadedraw[ball color=blue, opacity=0.5] ({cos(\x)},-1,{sin(\x)}) circle (0.1cm);
}

		\foreach \x in {240}{
	\shadedraw[ball color=blue, opacity=0.8] ({cos(\x)},-1,{sin(\x)}) circle (0.1cm);
}

		\foreach \x in {280}{
	\shadedraw[ball color=blue, opacity=0.8] ({cos(\x)},-1,{sin(\x)}) circle (0.1cm);
}

		\foreach \x in {320}{
	\shadedraw[ball color=blue, opacity=0.8] ({cos(\x)},-1,{sin(\x)}) circle (0.1cm);
}

	\shadedraw[ball color=red] (0.2,1,0.2) circle (0.1cm);
\shadedraw[ball color=red] (-0.2,1,0.2) circle (0.1cm);
\shadedraw[ball color=red] (0.2,1,-0.2) circle (0.1cm);
\shadedraw[ball color=red] (-0.2,1,-0.2) circle (0.1cm);

\end{tikzpicture}
         };
   
    \node (c) at (b.east) [anchor=east,xshift=6cm,label={[label distance=-5.5cm]90:\Large(c)}]
         {
            \tdplotsetmaincoords{60}{125}
\begin{tikzpicture}[
		tdplot_main_coords,
		grid/.style={very thin,gray},
		axis/.style={->,blue,thick},
		cube/.style={dashed,left color=blue,right color=red, opacity=0.9},
		cube hidden/.style={very thick,dashed}]
	%draw a grid in the x-y plane
	\foreach \x in {-0.5,0,...,2.5}
		\foreach \y in {-0.5,0,...,2.5}
		{
			\draw[grid] (\x,-0.5) -- (\x,2.5);
			\draw[grid] (-0.5,\y) -- (2.5,\y);
		}

	%draw the axes
	\draw[axis] (0,0,0) -- (3,0,0) node[anchor=west]{$x$};
	\draw[axis] (0,0,0) -- (0,3,0) node[anchor=west]{$y$};
	\draw[axis] (0,0,0) -- (0,0,3) node[anchor=west]{$z$};

	%draw the front-right of the cube
	%\draw[cube] (2,0,0) -- (2,2,0) -- (2,2,2) -- (2,0,2) -- cycle;

	%draw the front-left of the cube
	%\draw[cube] (0,2,0) -- (2,2,0) -- (2,2,2) -- (0,2,2) -- cycle;

	%draw the top of the cube
	%\draw[cube] (0,0,2) -- (0,2,2) -- (2,2,2) -- (2,0,2) -- cycle;
	
	%draw dashed lines to represent hidden edges
	%\draw[cube hidden] (0,0,0) -- (2,0,0);
	%\draw[cube hidden] (0,0,0) -- (0,2,0);
	%\draw[cube hidden] (0,0,0) -- (0,0,2);
	
	\draw[very thick] (3,1,0) -- (-1,1,0);

  \begin{scope}[canvas is yx plane at z=0]
     \filldraw[fill=red] (1.5,0.4) circle (0.1cm);
     \filldraw[fill=red] (1.5,0.8) circle (0.1cm);
     
     \filldraw[fill=blue] (0.5,0.5) circle (0.1cm);
     \filldraw[fill=blue] (0.5,0.9) circle (0.1cm);
     \filldraw[fill=blue] (0.5,1.5) circle (0.1cm);
     \filldraw[fill=blue] (0.5,2) circle (0.1cm);
     %\draw (-1,0) -- (1,0) (0,-1) -- (0,1);
   \end{scope}

\end{tikzpicture}
         };

         %arrows
       %\draw [->] (a)--(b);
       %\draw [->] (c)--(b);
       \draw[-open triangle 90] (a) -- node[auto] {$\bPhi$} (b);
       \draw[-open triangle 90] (b) -- node[auto] {$\Pi\bPhi$} (c);
\end{tikzpicture}}
\caption{Illustration of the coupled learning problem.  (a) Raw input
  points. (b) Points mapped to a higher-dimensional space where linear
  separation is possible but where not all dimensions are
  relevant. (c) Projection over a lower-dimensional space preserving
  linear separability.}
\label{fig:coupled}
\end{figure}

To do so, given constants
$\kyfan$ and $\oneovermu$, we select vector $\Mu$ out of a convex set $\muset=$
\begin{equation*}
\label{eq:mu-domain}
  \left\{
     \Mu\colon
     \|\Mu\|_{(r)} \leq \kyfan, ~~
     \|\Mu\|_1 \leq 1, ~~
     \sum_{k=1}^{p} \frac{1}{\mu_k} \leq \oneovermu, ~~
     \Mu \geq 0 \right\} \,.
\end{equation*}
We will show that the choice of this convex regularization set is
crucial in guaranteeing the generalization ability our hypothesis
class.  The vector $\Mu$ is upper bounded by an $L_1$-norm inequality $\|
\Mu \|_1 \leq 1$ (standard from learning kernels literature) but
also by an inequality $\| \Mu \|_{(r)} \leq \kyfan$, where $\| \cdot
\|_{(r)}$ is the semi-norm defined as the Ky Fan $r$-norm of $C_{U}$
\citep{Bhatia1997}:
\begin{equation}
\label{eq:semi-norm}
\| \Mu \|_{(r)} = \| C_{U}\|_{(r)} = \sum_{i = 1}^r \lambda_i(C_{U}) \,.
\end{equation}
The use of the semi-norm $\| \cdot \|_{(r)}$ in this
context is key since $\| \Mu \|_{(r)} $ appears as the relevant
quantity both in our generalization bounds and in our lower bounds.
The lower bound constraint on $\Mu$, $\sum_{k=1}^{p} \mu_k^{-1} \leq
\oneovermu$, will imply an upper bound on the eigengap of the induced
covariance operator, which is a fundamental quantity that influences
the concentration of eigenspaces. In fact in Section~\ref{sec:lower
bound} we give a simple example that demonstrates the dependence on
the eigengap is tight, and also implies the
necessity of the lower bound regularization.

Thus, the hypothesis set $H$ defined by our supervised dimensionality reduction
set-up is defined as follows:
\begin{equation}
\label{eq:hypo-set}
H = \bigg\{ x \mapsto 
\iprod{\bw}{\Pi_{U} \bPhi(x)}_{\Hset} \colon 
\|\bw\|_{\Hset} \leq 1, \Mu \in \muset \bigg\} \,.
\end{equation}
In the analysis that follows, we will
also make use of normalized kernel matrices which, given a sample $S$
of size $m$ and kernel function $K$, are defined as $[\ovK]_{i,j} =
\frac{1}{m} K(x_i, x_j)$. Since the kernel matrix is
normalized we have that $\lambda_i(\ovK_k) = \lambda_i(C_{S,k})$, where
$C_{S,k}$ is the sample covariance operator associated to $K_k$ (see
\citep{rosasco2010learning} Proposition~9.2). We also define the unscaled
sample kernel matrix $[\mat{K}]_{i,j}=K(x_{i},x_{j})$.

We will assume that $C_{U}$ admits at least $r$ non-zero eigenvalues
and will similarly assume that the set of kernel matrices
$\ovK_k$ of $K_k$ associated to the sample $U$ or $S$ for any $k
\in [1, p]$ contains at least one matrix with rank at least $r$.
Furthermore, we assume that the base kernels $K_k$, $k \in [1, p]$, satisfy
the condition $K_k(x, x) \leq 1$ for all $x \in \cX$, which is
guaranteed to hold for all normalized kernels. Finally, we assume the
base kernels are \emph{linearly independent with respect to the union
of the samples $S$ and $U$}.

\paragraph{Definition 1. Linearly Independent Kernels}
\label{def:lin-indep-kernels}
Let $K_1, \ldots, K_p$ be $p$ PDS kernels and let $S = (x_1, \ldots,
x_n)$ be a sample of size $m$. For any $k \in [1, p]$, let $\Hset_k$
denote the RKHS associated to $K_k$ and $\ov \Hset_k$ the subspace of
$\Hset_k$ spanned by the set of functions $\set{\bPhi_{K_k}(x_i)\colon
i = 1, \ldots, m}$. Then, $K_1, \ldots, K_p$ are said to be
\emph{linearly independent with respect to the sample $S$} if, for any
$k \in [1, p]$, no non-zero function in $\ov \Hset_k$ can be expressed
as a linear combination of the functions in $\cup_{l \neq k} \ov \Hset_l$.

Linear independence typically holds in practice, e.g., for polynomial
and Gaussian kernels on $\Rset^N$. 
As an example, let $\cX=\Rset^N$ and define the sample
$S=\{x_1,\ldots,x_m\}$.
Define two base kernels: Gaussian $K_1(x,y)=e^{-\|x-y\|^2}$ and
linear $K_{2}(x,y)=\iprod{x}{y}$. Then $\bPhi_{K_1}(x) \colon t \mapsto
e^{-\|x-t\|^{2}}$, i.e.\ $\bPhi_{K_1}(x)$ is an exponential
function $e^{-\|x-t\|^{2}}$ with parameter $x$ and argument $t$. In the
same manner $\bPhi_{K_{2}}(x) \colon t \mapsto \iprod{x}{t}$. Thus,
$\ov\Hset_1$ is the span of exponential functions
$\{e^{-\|x_1-t\|^{2}}, \ldots, e^{-\|x_m-t\|^{2}}\}$ and $\ov\Hset_{2}$ is
the span of linear functions $\{\iprod{x_1}{t}, \ldots,
\iprod{x_m}{t}\}$. Clearly, no exponential function can be represented
as a linear combination of linear functions and likewise, in general,
no linear function is represented as a (finite) linear combination of
exponential functions. Thus, the base kernels $K_1$ and $K_{2}$
are linearly independent with respect to sample $S$ as in Definition~\ref{def:lin-indep-kernels}. Therefore $\ov\Hset_1\perp
\ov\Hset_{2}$ in the reproducing space of $K_1+K_{2}$ and
defined $\ov \Hset=\ov\Hset_1\bigoplus \ov\Hset_{2}$. Such
decomposition of $\ov\Hset$ into a direct sum allows to characterize
the eigenfunctions of $C_{S}$: they consist of the eigenfunctions of
$\left. C_{S}\right|_{\ov\Hset_1}$ and $\left.
C_{S}\right|_{\ov\Hset_{2}}$. The eigenfunctions of each restricted operator
are orthogonal to each other. Note that the orthogonality of
eigenfunctions does not necessarily imply the orthogonality of the eigenvectors of sample kernel
matrices $\ovK_1$ and $\ovK_{2}$.

More generally, the support of the
base kernels can be straightforwardly modified to ensure that this
condition is satisfied. 

It follows from construction $\Hset=\Hset_1+ \cdots +\Hset_{p}$ and the results of \cite[Section 6]{aronszajn1950theory} that when base kernels are linearly independent with respect to sample $S$, then $\ov\Hset_k$ are orthogonal subspaces of $\Hset$, thus we can define $\ov\Hset=\bigoplus_{k = 1}^{p} \ov \Hset_k$, which will be
extremely useful in decomposing the spectra of operators $C_{S}$.
Linearly independent base kernels imply that $C_{S}$ has at most $pm$
nonzero eigenvalues of the form $\mu_k\lambda_{j}(C_{S, k})$, which is an explicit representation of the eigenvalues of $C_{S}$ in terms of $\Mu$.

\ignore{
TODO: describe how we modify the support.
}

\ignore{
Finally, a reader may wonder if functions described in $H$ in terms of
covariance operators in reproducing spaces can actually be evaluated
on a real sample. There is a close connection between the
eigenspectrum of covariance operator $C_{U}$ and the sample kernel
matrix on sample $U$. As described in Lemma ~\ref{lemma:Computation
  of inner product via kernel function} for any eigenfunction $u$ with
eigenvalue $\lambda$ of $C_{U}$ and any $x \in \mathcal{X}$, the
inner product $\langle u, \bPhi(x) \rangle$ is computed using
eigenvectors of sample kernel matrix that corresponds to
$\lambda$. Those inner products completely describe projection
$P_{r}({C_{U}})$, which means that a numerical algorithm can be
applied to compute optimal $h \in H$ on a training sample.  

\begin{lemma}
\label{lemma:Computation of hypothesis}
\textbf{Computation of hypothesis.} For each $x \in \mathcal{X}$, every $h \in H_{\kyfan}$ is described as follows

\begin{equation}
h(x)=\sum_{n=1}^{m}\sum_{i=1}^{m}\sum_{k=1}^{p} \frac{\sqrt{\mu_k}}{\sqrt{\gamma'_{k,i}}}\alpha_{k,i} \big[\bv'_{k,i}\big]_{n}K_k(x'_{n},x)s_{k,i}
\end{equation}

s.t. $\sum_{k=1}^{p}\sum_{i=1}^{m} \alpha_{k,i}^{2}s_{k,i} \leq 1$, $\Mu \in \Delta_{\gamma',\kyfan}$ as well as $s_{k,i}=1$ if $ \mu_k \gamma'_{k,i}$ belongs to top $r$ from  $\big\{\mu_k \gamma'_{k,i}\big\}$ and $ s_{k,i}=0$ otherwise. Here $\Mu \in \mathbb{R}^{p}$ , $\alpha_{k,i} \in \mathbb{R}$ and $s_{k,i} \in \big\{1,0\big\}$ are variables.
\end{lemma}

A heuristic algorithm naturally follows from the expression for $h(x)$ in the theorem above: minimize a convex loss function subject to $\big\| \Mu \big\|_{\gamma'} \leq \kyfan$
}

\section{Generalization bound}
\label{sec:generalization}

In this section we outline the main steps taken in deriving a
generalization bound as well as analyze the bound and discuss its
implications. Proofs that are not included in this section can be
found in the appendix.

The main result of this section is to derive an upper bound on the sample
Rademacher complexity $\h \R_S(H)$ of hypothesis class
$H$.  The sample Rademacher complexity of $H$ is defined as $ \h
\R_S(H)  = \frac{1}{m} \E_{\ssigma} \Big[\tmtextbf{\underset{h \in
H}{\sup}} \sum_{n = 1}^m \sigma_n h(x_{n})  \Big]$, where $\sigma_{n}$
are i.i.d.\ random variables taking values +1 and -1 with equal
probabilities.
Once that is done we can then directly invoke the result of
\citep{koltchinskii2002empirical} and \citep{bartlett2003rademacher},
which states that with probability at least $1-\delta$ over the draw
of sample $|S| = m$ and for all $h \in H$ the generalization error
$R(h)$ is bounded by
\begin{equation}
\label{eq:koltchinski}
R(h) \leq \h R_{S,\rho(h)} + \frac{2}{\rho} \h \R_S(H) +
3\sqrt{\frac{\log{(2 / \delta)}}{2m}} \,,
\end{equation}
where, given $c(x)$ is the true label of $x \in \cX$, $R(h)=Pr_{x \sim \cD}[h(x)\neq c(x)]$ and $\hat{R}_{S,\rho}(h)$ is the fraction of points in $S$ with
classification margin less than $\rho$.

Note, in our setting $H$ is parametrized by $\bw$ and $\Mu$ and that
we can consider the supremum over these two parameters separately.
Finding the supremum over $\bw$ can be done in a standard manner, using
Cauchy-Schwarz,
\begin{align*}
  \underset{\| \bw \| \leq 1}{\sup} \sum_{n = 1}^m \sigma_n h(x_{n})
   & = \underset{\| \bw \| \leq 1}{\sup} \langle \bw, \Pi_{U}\sum_{n = 1}^m \sigma_n \bPhi(x_{n})\rangle \\
  & =  \| \Pi_{U}\sum_{n = 1}^m \sigma_n \bPhi(x_{n})\| \,.
\end{align*}
Now it remains to compute $\supmu\| \Pi_{U}\sum_{n = 1}^m
\sigma_n \bPhi(x_{n})\|$, which is the more challenging expression.
First of all, it will be more convenient to work with a
projection defined with respect to $C_S$ instead of $C_{U}$, since we
are projecting instances from sample $S$. Similarly, we
will find it useful to control a norm $\|C_{S}\|_{(r)}$ instead of $\|C_{U}\|_{(r)}$.
Both of these
issues can be addressed by using concentration inequalities to bound
the difference of the projections $\Pi_{U}$ and $\Pi_{S}$
\citep{zwald2006convergence} as well as the difference of the
operators $C_{U}$ and $C_{S}$ \citep{shawe2003estimating}. For that we extend a constraint set $\muset$ to a larger set $\nuset$:
\begin{multline}
 \nuset = \Big\{
   \Mu\colon 
     \| C_{S} \|_{(r)} \leq \kyfan + \msubdelta, ~~
     \|\Mu\|_1 \leq 1, ~~\\
     \sum_{k=1}^{p} \frac{1}{\mu_k} \leq \oneovermu, ~~
     \Mu \geq 0 \Big\} \,,
\end{multline}
 where $\msubdelta=4
\Big(1+\sqrt{\frac{\log{(2p / \delta)}}{2}}\Big)$. In the following lemma we bound the transition from $\supmu\|
\Pi_{U}\sum_{n = 1}^m \sigma_n \bPhi(x_{n})\|$ to $\supmuprime\| \Pi_{S}\sum_{n = 1}^m
\sigma_n \bPhi(x_{n})\|$.
\ignore{
Thus, we will
approximate $\underset{\| \Mu \|_{(r)} \leq \kyfan}{\sup}\|
\Pi_{U}\sum_{n = 1}^m \sigma_n \bPhi(x_{n})\|$ by $\underset{\|
\Mu \|_{\lambda} \leq \kyfan + \epsilon}{\sup}\| \Pi_{S}\sum_{n
= 1}^m \sigma_n \bPhi(x_{n})\|$, where $\epsilon$ is defined below.
Theorem~\ref{th:concentration-KPCA} guarantees that due to the
convergence of eigenspaces of sample covariance operators to that of
population covariance, with high probability $\Pi_{S}$ will be close to
$\Pi_{U}$ and $\| \Mu \|_{\lambda}$ close to $\| \Mu \|_{(r)}$.
}
\begin{lemma}
\label{lemma:approximation of supremum}
Let $C_{S,k}$ be the sample covariance operator of kernel $\mu_kK_k$ with a reproducing space $\Hset_k$. Define $C_{S}$ (resp.
$C_{U}$) as $C_{S}=\sum_{k=1}^{p}C_{S,k}$ and $\Pi_{S}$ (resp.
$\Pi_{U}$) be the orthogonal projection onto the eigenspace of
$\lambda_{i}(C_{S})$ for $i \in [1,r]$.
Then with probability at least $1-\delta$ for any $u \in \Hset=\Hset_1+ \cdots + \Hset_{p}$
\begin{equation}
\sup_{\Mu \in \muset}\| \Pi_{U}u\|
\leq \sup_{ \Mu \in \nuset }  \Big(\|
\Pi_{S}u\| + \frac{8\msubdelta\oneovermu }{\eigengap\sqrt{m}}
\| u \| \Big)\,,
\end{equation}
where $\eigengap = \min_{k\in[1,p]}
\big(\lambda_{r}(C_k)-\lambda_{r+1}(C_k)\big)$, $C_k$ is the true covariance operator of kernel $K_k$ and $\msubdelta=4
\Big(1+\sqrt{\frac{\log{(2p / \delta)}}{2}}\Big)$ \,.
\end{lemma}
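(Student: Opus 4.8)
The plan is to decouple the two ways in which the left-hand side differs from the right-hand side --- the projection is taken with respect to $C_{U}$ rather than $C_{S}$, and the constraint set $\muset$ is phrased in terms of $\|C_{U}\|_{(r)}$ rather than $\|C_{S}\|_{(r)}$ --- and to control each by a single high-probability concentration event. The crucial preliminary observation is that all randomness enters only through the $p$ \emph{unweighted} empirical covariance operators, since $C_{U}=\sum_k \mu_k \wt C_{U,k}$ and $C_{S}=\sum_k \mu_k \wt C_{S,k}$ depend on $\Mu$ only through the fixed weights. Hence a union bound over the $p$ kernels of a Shawe-Taylor style concentration for the empirical covariance operator \citep{shawe2003estimating}, using $K_k(x,x)\le 1$, produces a single event of probability at least $1-\delta$ on which $\|\wt C_{S,k}-\wt C_{U,k}\|\le \msubdelta/\sqrt{m}$ simultaneously for every $k$; the constant $\msubdelta=4(1+\sqrt{\log(2p/\delta)/2})$ is exactly what absorbs the two deviations (of $C_{S}$ and of $C_{U}$ from the population operator) together with the union bound. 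On this one event every subsequent estimate holds \emph{uniformly} in $\Mu$, which is what lets us pass the inequality through the two suprema.

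For the projection step I would fix $\Mu$ and write, by the triangle inequality, $\|\Pi_{U}u\|\le \|\Pi_{S}u\|+\|(\Pi_{U}-\Pi_{S})u\|\le \|\Pi_{S}u\|+\|\Pi_{U}-\Pi_{S}\|\,\|u\|$. The operator norm of the difference of the two rank-$r$ spectral projections is then bounded by the Davis-Kahan / Zwald-Blanchard perturbation inequality \citep{zwald2006convergence}, which gives $\|\Pi_{U}-\Pi_{S}\|\le c\,\|C_{S}-C_{U}\|/g$, where $g$ is the spectral gap at rank $r$ of the combined operator. The numerator is controlled by $\|C_{S}-C_{U}\|\le \|\Mu\|_1\max_k\|\wt C_{S,k}-\wt C_{U,k}\|\le \msubdelta/\sqrt{m}$, using $\|\Mu\|_1\le 1$ and $\Mu\ge 0$. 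For the denominator I would invoke linear independence (Definition~\ref{def:lin-indep-kernels}): it makes both $C_{U}$ and $C_{S}$ block-diagonal in the common orthogonal decomposition $\ov\Hset=\bigoplus_k \ov\Hset_k$, so that the eigenvalues of $C_{U}$ are exactly $\{\mu_k\lambda_j(C_k)\}$; combined with the constraint $\sum_k \mu_k^{-1}\le \oneovermu$, which forces $\mu_k\ge 1/\oneovermu$, this lets me lower bound the relevant gap by $\eigengap/\oneovermu$ with $\eigengap=\min_k(\lambda_r(C_k)-\lambda_{r+1}(C_k))$. Substituting yields the term $\tfrac{8\msubdelta\oneovermu}{\eigengap\sqrt{m}}\|u\|$ after tracking the numerical constants.

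For the constraint step I would show that on the same event $\muset\subseteq\nuset$: if $\Mu\in\muset$ then $\|C_{U}\|_{(r)}\le \kyfan$, and via the variational characterization $\|C\|_{(r)}=\max_{\rank P\le r}\Tr(PC)$ the deviation $\big|\,\|C_{S}\|_{(r)}-\|C_{U}\|_{(r)}\,\big|$ is controlled by the same operator-level concentration, so that $\|C_{S}\|_{(r)}\le \kyfan+\msubdelta$, while the remaining constraints ($\|\Mu\|_1\le 1$, $\sum_k\mu_k^{-1}\le\oneovermu$, $\Mu\ge 0$) are identical in the two sets. Finally I would combine the two steps: for each $\Mu\in\muset$ the displayed inequality holds with $\Mu$ also lying in $\nuset$, so $\|\Pi_{U}u\|\le \|\Pi_{S}u\|+\tfrac{8\msubdelta\oneovermu}{\eigengap\sqrt{m}}\|u\|\le \sup_{\Mu\in\nuset}\big(\|\Pi_{S}u\|+\tfrac{8\msubdelta\oneovermu}{\eigengap\sqrt{m}}\|u\|\big)$; taking the supremum over $\Mu\in\muset$ on the left gives the claim.

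The step I expect to be the main obstacle is the eigengap control, i.e.\ passing from the gap of the \emph{combined} operator $C_{U}$ that Davis-Kahan genuinely requires to the per-kernel quantity $\eigengap=\min_k(\lambda_r(C_k)-\lambda_{r+1}(C_k))$. Because the eigenvalues $\{\mu_k\lambda_j(C_k)\}$ of different blocks interleave, the combined gap at rank $r$ is not automatically bounded below by the individual gaps; this is precisely where the lower-bound regularization $\sum_k\mu_k^{-1}\le\oneovermu$ must do real work, and making this reduction rigorous (including verifying that the same eigenvectors are selected for $C_{U}$ and $C_{S}$, which the footnote's simple-gap assumption facilitates) is the delicate part. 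The concentration inputs and the two triangle-inequality manipulations are, by contrast, routine.
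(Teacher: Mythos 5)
Your overall architecture (a single concentration event, then two separate reductions: projections $\Pi_{U}\to\Pi_{S}$ and constraint sets $\muset\to\nuset$) matches the paper, and your constraint step is essentially the paper's --- the paper bounds $\big|\,\|C_{U}\|_{(r)}-\|C_{S}\|_{(r)}\big|$ via $\sum_{i\le r}|\lambda_i(C_{U})-\lambda_i(C_{S})|$ and Hoffman--Wielandt where you use the variational characterization of the Ky Fan norm; both work. The genuine gap is precisely the step you yourself flag as ``the main obstacle'': you apply the Davis--Kahan / Zwald--Blanchard perturbation bound to the \emph{combined} operators and then claim the combined gap at rank $r$ can be lower-bounded by $\eigengap/\oneovermu$ using $\mu_k\ge 1/\oneovermu$. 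That claim is false, not merely delicate. Take $p=2$, $\mu_1=\mu_2=1/2$, and two linearly independent base kernels whose covariance operators have (nearly) identical spectra: every eigenvalue of $C=\sum_k\mu_kC_k$ then occurs (nearly) twice, so for odd $r$ the combined gap $\lambda_r(C)-\lambda_{r+1}(C)$ can be made arbitrarily small while $\eigengap$ and $1/\oneovermu$ stay bounded away from zero. The interleaving of eigenvalues across blocks is simply not controlled by the sizes of the $\mu_k$, so no constraint of the form $\sum_k\mu_k^{-1}\le\oneovermu$ can rescue the route ``combined Davis--Kahan $+$ gap of $C_{U}$.''

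The paper never touches the combined gap. Using linear independence it writes $\ov\Hset=\bigoplus_k\ov\Hset_k$ and bounds $\|\Pi_{S}-\Pi_{U}\|$ by a sum over $k$ of \emph{per-block} projection differences, each block being compared to its own population operator $C_{\mu_k}$; the perturbation bound for block $k$ involves only that block's gap $\lambda_r(C_{\mu_k})-\lambda_{r+1}(C_{\mu_k})=\mu_k\big(\lambda_r(C_k)-\lambda_{r+1}(C_k)\big)\ge\mu_k\eigengap$, while the numerator is reduced to $O(\msubdelta/\sqrt{m})$ using $\mu_k\le\|\Mu\|_1\le1$. The factor $\oneovermu$ then arises from \emph{summing} the per-block ratios, $\sum_k\frac{1}{\mu_k\eigengap}\le\frac{\oneovermu}{\eigengap}$, and not from the pointwise bound $\mu_k\ge1/\oneovermu$ that you invoke. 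To repair your write-up, replace the combined-operator perturbation step by this block-diagonal argument (and note that one must still argue that the top-$r$ eigenspace of the combined operator is allocated consistently across blocks for $S$ and $U$, which is where the simple-gap assumption of the paper's footnote is doing work).
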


Now using Lemma~\ref{lemma:approximation of supremum} and letting
$u=\sum_{n = 1}^m \sigma_n \bPhi(x_{n})$ we find that, with high
probability, the Rademacher complexity of $H$ is bounded by 

\begin{align}
\begin{split}
\label{eq:expectations}
\h \R_S(H) & \leq \frac{1}{m} \E_{\ssigma} \Bigg[ \sup_{\Mu \in \nuset}
  \Big(\| \Pi_{S} \sum_{n = 1}^m \sigma_n \bPhi(x_{n}) \| \\
   & + \frac{8\msubdelta\oneovermu}{\eigengap\sqrt{m}} \| \sum_{n = 1}^m \sigma_n \bPhi(x_{n}) \| \Big)
     \Bigg]  \,.
\end{split}
\end{align}
We will distribute the supremum and bound each of the two terms
separately. In the case of the second term, $\| \sum_{n = 1}^m
\sigma_n \bPhi(x_{n}) \|$,
we will upper bound the supremum over $\Mu \in \nuset$ with the
supremum over a larger set constrained only by $\| \Mu \|_1\leq
1$. This leads us to the expression $ \frac{1}{m} \E_{\ssigma} \big[ \supmulone \| \sum_{n = 1}^m \sigma_n \bPhi(x_{n}) \| \big]$, which is exactly equal to
the Rademacher complexity of learning kernels for classification
without projection. This complexity term can be bounded using
Theorem 2 of \citep{cortes2010}, which gives the following:
\begin{equation}
\frac{1}{m} \E_{\ssigma} \Bigg[ \underset{\| \Mu \|_1 \leq
1}{\sup} \| \sum_{n = 1}^m \sigma_n \bPhi(x_{n}) \| \Bigg] \leq
\frac{\sqrt{ \eta_{0} e \lceil \log{p} \rceil }}{\sqrt{m}} \,,
\end{equation}
where $\eta_{0}=\frac{23}{22}$.
Now it remains to bound the expectation of $\| \Pi_{S} \sum_{n =
1}^m \sigma_n \bPhi(x_{n}) \|$.
\ignore{
It is interesting to note that this
term will approach the term found in the complexity of the learning
kernels
hypothesis class as $r \to m$ and $P_{r}(C_S) \to I$. If $r
< m$, we may in fact hope to find a bound that is tighter than the one found
in the learning kernels literature.
}

\begin{lemma}
\label{lemma:supremum}
Let $C_k$ and $C_{S,k}$ be the true and sample covariance operator
of kernel $K_k$. Define $C_{S}$ (resp.  $C_{U}$) as
$C_{S}=\sum_{k=1}^{p}C_{S,k}$ and let $\Pi_{S}$ (resp.
$\Pi_{U}$) be the orthogonal projection onto the eigenspace of
$\lambda_{i}(C_{S})$ (resp. $\lambda_{i}(C_{S'})$) for $i \in [1,r]$.
Then with probability at least $1-\delta$. 
\begin{multline}
 \frac{1}{m} \E_{\ssigma} \Bigg[ \supmuprime \Big(\| \Pi_{S}
   \sum_{n = 1}^m \sigma_n \bPhi(x_{n}) \|  \Big)
     \Bigg] \leq  \\ 
     \frac{1}{\sqrt{m}}\sqrt{2\big(\kyfan+
     \msubdelta\big)\log{(2 p \textbf{ }m)}} 
     \,,
\end{multline}
where $\msubdelta=4
\Big(1+\sqrt{\frac{\log{(2p / \delta)}}{2}}\Big)$.
\end{lemma}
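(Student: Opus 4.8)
The plan is to turn the supremum over $\nuset$ into a maximum over a fixed, $\Mu$-independent finite family of vectors, after which a sub-Gaussian maximal inequality closes the bound. The key structural fact I would use is that, by linear independence of the base kernels, $\ov\Hset=\bigoplus_{k=1}^{p}\ov\Hset_k$ and $C_{S}$ is block diagonal for this decomposition. Hence the eigenfunctions of $C_{S}$ are the union over $k$ of those of $C_{S,k}=\mu_k C_{S,K_k}$; since multiplying by $\mu_k$ only rescales eigenvalues, these eigenfunctions are \emph{independent} of $\Mu$, and only the eigenvalues $\gamma_{k,j}=\mu_k\lambda_j(\ovK_k)$ and the identity of the top-$r$ among them change with $\Mu$. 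I would write $u_{k,j}$ for the (fixed, orthonormal) eigenfunctions and $\alpha_{k,j}\in\Rset^m$ for the associated unit eigenvectors of $\ovK_k$.

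Next I would compute $\iprod{u_{k,j}}{v}$ for $v=\sum_{n=1}^{m}\sigma_n\bPhi(x_n)$. Writing $u_{k,j}=\big(m\lambda_j(\ovK_k)\big)^{-1/2}\sum_{n}[\alpha_{k,j}]_n K_k(x_n,\cdot)$ and using orthogonality of $\ov\Hset_l$ to $\ov\Hset_k$ for $l\neq k$ together with the reproducing property, a short computation collapses the double sum and yields $\iprod{u_{k,j}}{v}=\sqrt{m\,\gamma_{k,j}}\,\iprod{\alpha_{k,j}}{\ssigma}$ with $\ssigma=(\sigma_1,\dots,\sigma_m)^{\top}$. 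Because $\Pi_{S}$ is the orthogonal projection onto the span of the top-$r$ eigenfunctions (well defined by the eigengap assumption), summing squares over the selected index set $T$ gives $\|\Pi_{S}v\|^{2}=m\sum_{(k,j)\in T}\gamma_{k,j}\,\iprod{\alpha_{k,j}}{\ssigma}^{2}$.

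I would then separate the Ky-Fan norm from the Rademacher contribution via $\sum_{(k,j)\in T}\gamma_{k,j}\,\iprod{\alpha_{k,j}}{\ssigma}^{2}\le\big(\sum_{(k,j)\in T}\gamma_{k,j}\big)\max_{k,j}\iprod{\alpha_{k,j}}{\ssigma}^{2}$, where $\sum_{(k,j)\in T}\gamma_{k,j}=\|C_{S}\|_{(r)}\le\kyfan+\msubdelta$ on $\nuset$ and the maximum ranges over the at most $pm$ vectors $\alpha_{k,j}$. The decisive point is that this collection is independent of $\Mu$, so the supremum over $\nuset$ passes through and I obtain $\sup_{\Mu\in\nuset}\tfrac1m\|\Pi_{S}v\|\le\tfrac{1}{\sqrt m}\sqrt{\kyfan+\msubdelta}\,\max_{k,j}|\iprod{\alpha_{k,j}}{\ssigma}|$.

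Finally, each $\iprod{\alpha_{k,j}}{\ssigma}$ is a Rademacher sum with $\|\alpha_{k,j}\|=1$, hence $1$-sub-Gaussian, and a standard maximal inequality over the $pm$ vectors together with their negations gives $\E_{\ssigma}\big[\max_{k,j}|\iprod{\alpha_{k,j}}{\ssigma}|\big]\le\sqrt{2\log(2pm)}$. Combining yields exactly $\tfrac{1}{\sqrt m}\sqrt{2(\kyfan+\msubdelta)\log(2pm)}$; this inequality is in fact deterministic given $S$, the $1-\delta$ simply recording that $\nuset$ (through $\msubdelta$) is the set on which the preceding concentration argument places the admissible $\Mu$. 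I expect the main obstacle to lie in the first two steps: rigorously arguing that linear independence forces the top-$r$ eigenspace of $C_{S}$ to be spanned by a subset of the $\Mu$-independent candidate eigenfunctions, and carefully tracking normalizations in passing from the operator eigenfunctions $u_{k,j}$ to the kernel-matrix eigenvectors $\alpha_{k,j}$ so that the factor $\sqrt{m\,\gamma_{k,j}}$ emerges correctly. Once that reduction is in place, the remainder is a routine union bound.
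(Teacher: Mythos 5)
Your proposal is correct and follows essentially the same route as the paper: the identity $\|\Pi_{S}v\|^{2}=m\sum_{(k,j)\in T}\mu_k\lambda_j(\ovK_k)\iprod{\bv_{k,j}}{\ssigma}^{2}$ is exactly the paper's Lemma~\ref{lemma:inner_product_form}, your H\"older step separating $\|C_S\|_{(r)}\le\kyfan+\msubdelta$ from $\max_{k,j}\iprod{\bv_{k,j}}{\ssigma}^{2}$ is the paper's dual-norm argument (with the enlargement to all $pm$ candidate directions playing the role of the paper's $\sup_{|I|=r}$), and the final maximal inequality over the $2pm$ signed unit vectors is the paper's application of Massart's lemma.
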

\begin{proof}
The term $\| \Pi_{S}
   \sum_{n = 1}^m \sigma_n \bPhi(x_{n}) \|$ is naturally
bound using the constraint on $\| C_{S} \|_{(r)}$ since it involves
projection onto eigenspace of $C_{S}$ and $\| C_{S}
\|_{(r)}$ controls its spectrum. Therefore we
will reduce the problem to the supremum over $\Vert C_{S}
\Vert_{(r)} \leq \epsilon$, where $\epsilon=\kyfan + \msubdelta$.

By Lemma~\ref{lemma:inner_product_form} (see appendix) we have that
$\| \Pi_{S} \sum_{n = 1}^m \sigma_n \bPhi(x_{n}) \|^{2}= m \bu_{\mu}
\cdot \bu_{\sigma}$, where $\bu_{\mu}$ is a vector with entries
$\mu_k\lambda_{j}(\ovK_k)$ and $\bu_{\sigma}$ is a vector with
entries $(\bv_{k,j}^{\top}\ssigma)^{2}$ such that $(k,j) \in I_{\mu}$.
An indexing set $I_{\mu}$ is a set of pairs $(k,j)$ that correspond to
largest $r$ eigenvalues of $\{\mu_k\lambda_{j}(\ovK_k)\}_{k,j}$.
In order remove the dependence of the indexing set on the identity of
the top eigenvalues, we upper bound the expression over the choice of
all size-$r$ sets:
\begin{equation}
\label{eq:introducing sum}
\underset{\| C_{S} \|_{(r)} \leq \epsilon}{\sup} \bu_{\mu} \cdot
\bu_{\sigma} 
= \sup_{\| \bu_{\mu} \|_1 \leq \epsilon} \bu_{\mu} \cdot
\bu_{\sigma}
\leq \underset{|I|=r}{\sup} \Big(\underset{\| \bu_{\mu}
\|_1 \leq \epsilon}{\sup} \bu_{\mu} \cdot \bu_{\sigma}\Big) \,,
\end{equation}
where $\sup_{|I|=r}$ indicates the supremum over all indexing sets
$I$ of size $r$. 
\ignore{
We needed an extra supremum since $\bu_{\sigma}$ is
tied to the choice of indexing set. Now when dependence on the
particular indexing set is washed away, the norm $\| \Mu \|_{\lambda}$
is equal to $\| \bu_{\mu}\|_1$ just by construction of vector
$\bu_{\mu}$. Note that inequality ~\eqref{eq:introducing sum} is tight
and the tight case happens when we have only one base kernel.
}
Then, by the dual norm property we have
\begin{equation}
\sup_{|I|=r} \underset{\| \bu_{\mu} \|_1 \leq \epsilon}{\sup}
\bu_{\mu} \cdot \bu_{\sigma}=\underset{|I|=r}{\sup}\epsilon\|
\bu_{\sigma}
\|_{\infty}=\epsilon \max_{k,j}(\bv_{k,j}^{\top}\ssigma)^{2}
\,.
\end{equation} 
Thus, $\| \Pi_{S} \sum_{n = 1}^m \sigma_n \bPhi(x_{n}) \|$ is bounded by the following:

\begin{align*}
 \underset{k,j}{\max}\sqrt{m \epsilon(\bv_{k,j}^{\top}\ssigma)^{2}} 
 & \leq \sqrt{m \epsilon} \underset{k,j}{\max}|\bv_{k,j}^{\top}\ssigma| \\
& =\sqrt{m \epsilon} \underset{k,j}{\max}\underset{s_{t}\in
\{-1,1\}}{\max}s_{t}\bv_{k,j}^{\top}\ssigma \,.
\end{align*}
By Massart's lemma \citep{massart2000some}
\begin{equation}
\E_{\ssigma} \Big[ \underset{k,j}{\max}\underset{s_{t}\in
\{-1,1\}}{\max}s_{t}\bv_{k,j}^{\top}\ssigma
\Big]\leq \sqrt{2 \log{(2 p m)}} \,.
\end{equation}
This follows since
the norm of $s_{t}\bv_{k,j}$ is bounded by
$1$ and the cardinality of the set which the maximum is taken
over is bounded by
$ 2 \sum_{k=1}^{p} \sum_{j=1}^{\rank(\ovK_k)} \leq 2pm$.

Combining all intermediate results brings us to the bound
\begin{multline}
 \frac{1}{m} \E_{\ssigma} \Bigg[ \underset{\| C_{S} \|_{(r) } \leq \epsilon}{\sup} \| \Pi_{S} \sum_{n = 1}^m \sigma_n \bPhi(x_{n}) \|
     \Bigg] \leq \\
      \frac{1}{\sqrt{m}} \sqrt{2 \epsilon \log{(2 p \textbf{
}m)}},
\end{multline}
and the final result is obtained by letting $\epsilon=\kyfan+ \msubdelta$.
\qed
\end{proof}

Thus, after combining Lemma~\ref{lemma:supremum}
and~\ref{lemma:approximation of supremum} above we derive an upper
bound on the expectation in \eqref{eq:expectations}, which gives us a
bound on the sample Rademacher complexity.  That bound is presented in
the following theorem.

\begin{theorem}
\label{theorem:rademacher_complexity}
Let hypothesis set $H$ be defined as in \eqref{eq:hypo-set}. Then for any sample $S$ of size $m<\m$ drawn i.i.d.\
according to some distribution $\cD$ over $\cX \times \set{-1, +1}$ such that $\sqrt{m} >
\frac{2 \msubdelta}{\eigengap}$ the empirical Rademacher complexity
of the hypothesis set $H$ can be bounded as follows with probability
at least $1-\delta$,
\begin{align}
\begin{split}
\label{eq:rademacher_bound}
\h \R_S(H) & \leq 
\frac{1}{\sqrt{m}}\Bigg(\sqrt{2  \big(\kyfan+\msubdelta\big)\log{(2 p m)}} \\
& +\frac{8\msubdelta\oneovermu\sqrt{ \eta_{0} e \lceil \log{p} \rceil }}{\eigengap}\Bigg)\,,
\end{split}
\end{align}
where
$\eigengap=\min_k\big(\lambda_{r}(C_k)-\lambda_{r+1}(C_k)\big)$, $\msubdelta=4
\Big(1+\sqrt{\frac{\log{(2p / \delta)}}{2}}\Big)$ and $\eta_0 =
\frac{23}{22}$.
\end{theorem}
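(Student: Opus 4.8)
The plan is to assemble the pieces already in place, since the two preceding lemmas together with the learning-kernels bound of \citep{cortes2010} furnish essentially all of the required estimates. First I would unfold the definition of the empirical Rademacher complexity $\h\R_S(H)$ and exploit the fact that a hypothesis $h \in H$ is parametrized by the two independent quantities $\bw$ and $\Mu$, so that the supremum factors. Taking the supremum over $\bw$ with $\|\bw\|_\Hset \leq 1$ by Cauchy--Schwarz, exactly as displayed just before the statement, collapses the inner product into the norm $\|\Pi_{U}\sum_{n=1}^m \sigma_n \bPhi(x_n)\|$. This reduces the entire problem to bounding $\frac{1}{m}\E_\ssigma\big[\supmu \|\Pi_{U}\sum_{n=1}^m \sigma_n \bPhi(x_n)\|\big]$.

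Next I would apply Lemma~\ref{lemma:approximation of supremum} with $u = \sum_{n=1}^m \sigma_n \bPhi(x_n)$. This is the only genuinely probabilistic step: on a single event of probability at least $1-\delta$ it replaces the population projection $\Pi_{U}$ and the constraint set $\muset$ by the sample projection $\Pi_{S}$ and the enlarged set $\nuset$, at the cost of the additive perturbation $\frac{8\msubdelta\oneovermu}{\eigengap\sqrt{m}}\|u\|$. Taking expectations over $\ssigma$ produces precisely inequality~\eqref{eq:expectations}. The hypothesis $\sqrt{m} > \frac{2\msubdelta}{\eigengap}$ is what keeps the underlying eigenspace-perturbation estimate in its valid regime, ensuring that enlarging the Ky--Fan budget by $\msubdelta$ genuinely absorbs the fluctuation $\|C_{S}\|_{(r)} - \|C_{U}\|_{(r)}$.

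I would then distribute the supremum over $\nuset$ across the two summands using subadditivity, $\sup(a+b) \leq \sup a + \sup b$, and bound each piece separately. The first piece, $\frac{1}{m}\E_\ssigma\big[\supmuprime \|\Pi_{S}\sum_{n=1}^m \sigma_n \bPhi(x_n)\|\big]$, is exactly what Lemma~\ref{lemma:supremum} controls, contributing $\frac{1}{\sqrt{m}}\sqrt{2(\kyfan+\msubdelta)\log(2pm)}$. For the second piece I would relax $\nuset$ to the larger set $\{\|\Mu\|_1 \leq 1\}$, which is legitimate since every $\Mu \in \nuset$ satisfies $\|\Mu\|_1 \leq 1$; this decouples the projection and leaves the classical empirical Rademacher complexity of learning kernels, $\frac{1}{m}\E_\ssigma\big[\supmulone \|\sum_{n=1}^m \sigma_n \bPhi(x_n)\|\big]$. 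Carrying the constant factor $\frac{8\msubdelta\oneovermu}{\eigengap\sqrt{m}}$ through and invoking Theorem~2 of \citep{cortes2010}, which bounds that complexity by $\frac{\sqrt{\eta_0 e\lceil\log p\rceil}}{\sqrt{m}}$, yields the second summand of~\eqref{eq:rademacher_bound}.

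Finally I would add the two contributions and factor out $\frac{1}{\sqrt{m}}$ to obtain \eqref{eq:rademacher_bound}. The probabilistic bookkeeping stays clean: Lemma~\ref{lemma:supremum} is deterministic in $S$ once the constraint defining $\nuset$ has been imposed, its only randomness being over $\ssigma$ and handled by Massart's lemma, so the sole high-probability event is the one supplied by Lemma~\ref{lemma:approximation of supremum}, and the theorem holds with probability $1-\delta$ without any union bound. I do not expect the algebra to be the difficulty; rather, the main thing to get right is the \emph{localization} of the coupling, namely that the projection-dependent first term is precisely where the Ky--Fan bound $\kyfan$ enters the complexity, while the perturbation-driven second term legitimately collapses onto the projection-free learning-kernels complexity. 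Verifying that the relaxation $\nuset \subseteq \{\|\Mu\|_1 \leq 1\}$ in the second term does not discard any essential structure is the step I would check most carefully.
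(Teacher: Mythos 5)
Your proposal matches the paper's own argument essentially step for step: Cauchy--Schwarz over $\bw$, then Lemma~\ref{lemma:approximation of supremum} with $u=\sum_{n=1}^m\sigma_n\bPhi(x_n)$ to pass to $\Pi_S$ and $\nuset$ at the cost of the $\frac{8\msubdelta\oneovermu}{\eigengap\sqrt{m}}\|u\|$ perturbation, then splitting the supremum, handling the projected term via Lemma~\ref{lemma:supremum} and the residual term by relaxing to $\|\Mu\|_1\leq 1$ and invoking Theorem~2 of \citep{cortes2010}. Your probabilistic bookkeeping (a single high-probability event from Lemma~\ref{lemma:approximation of supremum}, with Lemma~\ref{lemma:supremum} deterministic given the $\nuset$ constraint) also reflects how the paper assembles the bound, so the proof is correct and follows the same route.
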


If we consider only parameters $p$ and
$\kyfan$, then the Rademacher complexity bound in~\eqref{eq:rademacher_bound} is $O\Big(\sqrt{\frac{\kyfan\log(p m)}{m}}\Big)$. The learning scenario and regularization in standard learning kernels \citep{cortes2010} differs from ours, thus we will make a few adjustments that will allow us to compare those two bounds in a most coherent way, particularly, we will let $S=U$ and express $\kyfan$ in terms of unscaled sample kernel matrices. 
\begin{equation}
\kyfan=\frac{1}{m}\sup_{|I|=r}\sum_{(k,j)\in I}\mu_k\lambda_{j}(\bK_k) \leq \frac{1}{m}\sup_{|I|=r}\sum_{(k,j)\in I}\lambda_{j}(\bK_k)
\end{equation}
That results in Rademacher complexity of $O\Big(\frac{1}{m}\sqrt{\sup_{|I|=r}\sum_{(k,j)\in I}\lambda_{j}(\bK_k)\log{(pm)}}\Big)$,
 %\begin{equation}
% O\Big(\frac{1}{m}\sqrt{\sup_{|I|=r}\sum_{(k,j)\in I}\lambda_{j}(\bK_k)\log{(pm)}}\Big)\,,
% \end{equation}
  while the standard learning kernels bound is $O\Big(\frac{1}{m}\sqrt{\sup_{k \in [1,p]}\sum_{j=1}^{m}\lambda_{j}(\bK_k)\log{(p)}}\Big)$.
  %\begin{equation}
  %O\Bigg(\frac{\sqrt{\sup_{k \in [1,p]}\sum_{j=1}^{m}\lambda_{j}(\bK_k)\log{(p)}}}{m}\Bigg)
%\end{equation}  
Here, $\sup_{|I|=r}\sum_{(k,j)\in I}\lambda_{j}(\bK_k)$ is the largest $r$-long sum of eigenvalues that can be picked from any base kernel matrix, while $\sup_{k \in [1,p]}\sum_{j=1}^{m}\lambda_{j}(\bK_k)$ is the largest $m-$ long sum of eigenvalues that can be picked only from one base kernel matrix. Thus, if $r$ is sufficiently smaller than $m$ and given a certain choice of base kernels, learning kernels in the supervised dimensionality reduction problem will enjoy a tighter Rademacher complexity than standard learning kernels.

Finally, plugging in the upper bound from
Theorem~\ref{theorem:rademacher_complexity} into
\eqref{eq:koltchinski} results in the generalization bound of
$H$. Note that the confidence term in ~\eqref{eq:koltchinski} changes
from $\log{(2p / \delta)}$ to $\log{(4p / \delta)}$, because
Rademacher complexity is bounded with high probability.  To the best
of our knowledge, this is the first generalization guarantee provided
for the use of projection in the reproducing space with a learned kernel in a supervised learning
setting.

\begin{theorem}
\label{theorem:generalization_bound}
Let hypothesis set $H$ be defined as in \eqref{eq:hypo-set}. Then with probability at least $1-\delta$ over the draw
of sample $|S| = m$ and for all $h \in H$ the generalization error
$R(h)$ is bounded by
\begin{align*}
\label{eq:generalization_bound}
\h R_{S,\rho}(h) & + \frac{2}{\rho\sqrt{m}}\Bigg(\sqrt{2  \big(\kyfan+\msubdelta\big)\log{(2 p m)}} \\
& +\frac{8\msubdelta\oneovermu\sqrt{ \eta_{0} e \lceil \log{p} \rceil }}{\eigengap}\Bigg) +
3\sqrt{\frac{\log{(4p / \delta)}}{2m}}  \,, %= O \Big(\frac{\sqrt{\kyfan + \log{(pm)}}}{m} +\frac{1}{\sqrt{m}}\Big)\,,
\end{align*}
where $\hat{R}_{S,\rho}(h)$ is the fraction of points in $S$ with
classification margin less than $\rho$.
\end{theorem}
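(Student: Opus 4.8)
The plan is to treat this statement as a direct corollary of Theorem~\ref{theorem:rademacher_complexity}: substitute the explicit Rademacher complexity bound~\eqref{eq:rademacher_bound} into the margin-based generalization bound~\eqref{eq:koltchinski} of \citep{koltchinskii2002empirical} and \citep{bartlett2003rademacher}. That bound already guarantees, with high probability over the draw of $S$ and uniformly over $h \in H$, that $R(h) \leq \h R_{S,\rho}(h) + \frac{2}{\rho}\h\R_S(H) + 3\sqrt{\log(2/\delta)/(2m)}$, and it is agnostic to the internal structure of $H$. So the only real work is to replace the abstract quantity $\h\R_S(H)$ by the right-hand side of~\eqref{eq:rademacher_bound}, which is valid under the stated assumption $\sqrt{m} > 2\msubdelta/\eigengap$. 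This substitution is purely algebraic and reproduces the two terms inside the large parentheses of the claim, scaled by $\frac{2}{\rho\sqrt{m}}$.

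The main subtlety --- and the only genuine obstacle --- is that, unlike in the textbook setting, the inequality $\h\R_S(H) \leq$ (right-hand side of~\eqref{eq:rademacher_bound}) is itself only valid with high probability, since Theorem~\ref{theorem:rademacher_complexity} rests on the concentration statements of Lemmas~\ref{lemma:approximation of supremum} and~\ref{lemma:supremum} (convergence of the empirical eigenspaces and covariance operators). I therefore cannot plug a deterministic bound into~\eqref{eq:koltchinski}; instead I must union bound over the two failure events, namely the failure of the margin inequality and the failure of the Rademacher bound. Allocating a budget of $\delta/2$ to each event keeps the overall confidence at $1-\delta$, at the cost of replacing the argument $\log(2p/\delta)$ inside $\msubdelta$ by $\log(4p/\delta)$, which is precisely the adjustment flagged in the discussion preceding the theorem.

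Finally I would assemble the terms: the empirical margin term $\h R_{S,\rho}(h)$ is inherited unchanged; the scaled Rademacher bound supplies the middle contribution $\frac{2}{\rho\sqrt{m}}\big(\sqrt{2(\kyfan+\msubdelta)\log(2pm)} + 8\msubdelta\oneovermu\sqrt{\eta_0 e \lceil \log p\rceil}/\eigengap\big)$; and the union-bounded residual, after bounding $\log(4/\delta) \leq \log(4p/\delta)$ to present a single uniform confidence expression, gives $3\sqrt{\log(4p/\delta)/(2m)}$. Collecting these completes the stated inequality.
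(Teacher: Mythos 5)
Your proposal is correct and matches the paper's own (very brief) argument: the paper likewise obtains Theorem~\ref{theorem:generalization_bound} by plugging the bound of Theorem~\ref{theorem:rademacher_complexity} into \eqref{eq:koltchinski} and union-bounding the two high-probability events, which is exactly why the confidence argument changes from $2p/\delta$ to $4p/\delta$. Your handling of the union bound and of the residual term is the same adjustment the paper flags in the sentence preceding the theorem.
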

We note that \citep{mosci2007} and \citep{gottlieb2013adaptive} provide
generalization bounds for supervised dimensionality reduction, however
their learning scenarios are different from ours, particularly in a
sense that they do not learn a mapping and projection for
dimensionality reduction jointly with a discrimination function on
reduced data. Nevertheless, their generalization bounds are comparable
to ours in the special case $p=1$. The analysis of  \citep{gottlieb2013adaptive} is
done for general metric spaces, while in the particular example of
Euclidean space they show that generalization bound is
$O\big(\sqrt{\frac{\kappa}{m}}+\sqrt{\frac{\eta}{m}}\Big)$, where
$\kappa$ is the dimension of underlying data manifold and $\eta$ is
the average distance of training set to that manifold. Thus, while our
bound has the same rate with respect to $m$ as that of
\citep{gottlieb2013adaptive}, it is based on Ky-Fan semi-norm
regularization $\kyfan$, while the bound of
\citep{gottlieb2013adaptive} is based on the assumption that data
approximately follows some low dimensional manifold with dimension
$\kappa$. The generalization bound in \citep{mosci2007} is
$O\big(\frac{1}{\sqrt{m}}\Big)$, however while we fix the number of
eigenvalues for dimensionality reduction at $r$, their bound requires
selecting all eigenvalues above a threshold
$\lambda_{m}=O\Big(\frac{1}{\sqrt{m}}\Big)$. By applying this
threshold they have control over the eigenspectrum of covariance
operator, while we control the spectrum by the Ky-Fan regularization.

\section{Lower bounds}
\label{sec:lower bound}

In this section we show a lower bound on the Rademacher complexity of the
hypothesis class $H$ defined in \eqref{eq:hypo-set}. 
This lower bound demonstrates the central role that the Ky Fan
$r$-norm regularization, $\|\mu\|_{(r)} \leq \kyfan$, plays in
controlling the complexity of the hypothesis class. Furthermore, it
demonstrates the tightness of the upper bound presented in the
previous section in terms the number
of training samples $m$. We additionally give a small example that
demonstrates the necessity of the eigengap term which appears in
Lemma~\ref{lemma:approximation of supremum} and which motivates the
additional regularization term $\sum_{k=1}^p \mu_k^{-1} \leq \oneovermu$
that is used to bound the eigengap in the proof of
Lemma~\ref{lemma:approximation of supremum}.

\ignore{The generalization error of $H$ is shown to be bounded from below by
the VC-dimension of $H$ \citep{anthony1999neural}, which in turn is
lower bounded by the Rademacher complexity of $H$ \citep{mohri2012}.
Therefore, our lower bound on the Rademacher complexity of $H$ will
directly imply a lower bound on the generalization error.
}

\begin{theorem}
\label{theorem:lower_bound}
For any choice of $m$, $r$ there exists
samples $S$ and $U$, a setting of the regularization parameter
$\kyfan$, as well as a choice of base kernels $K_1,\ldots,K_p$
that guarantees
\begin{equation*}
 \hat{\mathfrak{R}}_{S}(H)  \geq \sqrt{\frac{\kyfan}{2m}} \,.
\end{equation*}
\end{theorem}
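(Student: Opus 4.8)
The plan is to exhibit a concrete adversarial construction that achieves the claimed lower bound, matching the $\sqrt{\kyfan/m}$ dependence of the upper bound in Theorem~\ref{theorem:rademacher_complexity}. Since the statement only requires existence of some samples, kernels, and regularization value, I have complete freedom in the construction, and the natural move is to make the problem as ``tight'' as possible. First I would reduce to the simplest nontrivial case: take a single base kernel $p = 1$, so that the Ky-Fan constraint $\|\Mu\|_{(r)} \leq \kyfan$ and the $L_1$ constraint $\|\Mu\|_1 \leq 1$ become constraints on a single scalar $\mu_1$. This eliminates the learning-kernel overhead (the $\log p$ and eigengap terms) and lets me focus on the projection contribution, which is exactly the term that carries the $\kyfan$ dependence.

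The key step is to design the kernel matrix so that the rank-$r$ projection $\Pi_S$ acts as cleanly as possible on the Rademacher sum. Recall from the proof of Lemma~\ref{lemma:supremum} that $\| \Pi_{S} \sum_{n} \sigma_n \bPhi(x_{n}) \|^{2} = m\, \bu_{\mu} \cdot \bu_{\sigma}$, where $\bu_\mu$ has entries $\mu_1 \lambda_j(\ovK_1)$ and $\bu_\sigma$ has entries $(\bv_{1,j}^\top \ssigma)^2$ over the top-$r$ eigendirections. I would choose the sample $S$ and kernel so that $\ovK_1$ has $r$ equal top eigenvalues whose combined weight saturates the Ky-Fan budget, i.e.\ $\sum_{j=1}^r \mu_1 \lambda_j(\ovK_1) = \kyfan$, and so that the corresponding eigenvectors $\bv_{1,j}$ align favorably with the Rademacher vector. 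The cleanest choice is to make the eigenvectors coordinate-aligned (e.g.\ a block-diagonal or scaled-identity structure on the top eigenspace), so that $\sum_{j=1}^r (\bv_{1,j}^\top \ssigma)^2$ equals a sum of squared Rademacher coordinates, whose expectation is exactly computable. Then
\begin{equation*}
\h\R_S(H) = \frac{1}{m}\E_{\ssigma}\Big[ \sqrt{m\, \bu_\mu \cdot \bu_\sigma}\Big] \geq \frac{1}{m}\sqrt{m\,\kyfan\, \E_{\ssigma}\big[\textstyle\frac{1}{r}\sum_{j=1}^r(\bv_{1,j}^\top\ssigma)^2\big]} = \sqrt{\frac{\kyfan}{2m}},
\end{equation*}
where I would use Jensen's inequality (concavity of $\sqrt{\cdot}$) on the expectation and arrange the normalization so the inner expectation evaluates to $1/2$; the factor $1/2$ is what appears in the target bound, so the construction must be calibrated to produce it rather than, say, $1$.

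The main obstacle, and the step requiring the most care, is making the supremum over $\Mu \in \muset$ and the top-$r$ selection actually \emph{achieve} the budget simultaneously with the right eigenvector alignment, while respecting \emph{all four} constraints defining $\muset$ (the Ky-Fan bound, the $L_1$ bound, the lower bound $\sum_k \mu_k^{-1} \leq \oneovermu$, and nonnegativity). With $p=1$ the lower-bound constraint forces $\mu_1 \geq 1/\oneovermu$ and the $L_1$ constraint forces $\mu_1 \leq 1$, so I must verify these are mutually compatible and choose $\oneovermu$ and the eigenvalue scale of $\ovK_1$ so that setting $\sum_{j=1}^r \mu_1\lambda_j = \kyfan$ is feasible. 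A subtle point is that Jensen gives a lower bound on $\E[\sqrt{\cdot}]$ only in the wrong direction, so I would instead lower-bound $\E_{\ssigma}[\sqrt{\bu_\mu\cdot\bu_\sigma}]$ directly — either by choosing the eigenstructure so that $\bu_\mu \cdot \bu_\sigma$ is \emph{deterministic} (equal eigenvalues plus an orthonormal eigenbasis making $\sum_j(\bv_{1,j}^\top\ssigma)^2$ concentrate or be constant), which sidesteps Jensen entirely, or by a second-moment/Khintchine-type argument. The deterministic route is cleanest: if the top-$r$ eigenspace is spanned by $r$ orthonormal vectors and $\ssigma$ has fixed norm $\sqrt{m}$, then $\sum_{j=1}^r (\bv_{1,j}^\top \ssigma)^2$ is the squared norm of the projection of $\ssigma$ onto an $r$-dimensional subspace, which I can make exactly $r\,/\,(\text{scale})$ by a symmetric choice of subspace, yielding the clean constant without any expectation inequality.
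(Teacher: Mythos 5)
Your proposal is correct in substance and follows the same skeleton as the paper's proof: restrict to $p=1$, invoke the representation $\| \Pi_{S} \sum_{n} \sigma_n \bPhi(x_{n}) \|^{2} = m\,\bu_{\mu}\cdot\bu_{\sigma}$ from Lemma~\ref{lemma:inner_product_form}, saturate the Ky-Fan budget, and then lower bound the resulting expectation. Where you diverge is the endgame. The paper takes $\ovK_1$ with $r$ \emph{distinct} simple eigenvalues, uses the dual-norm step to reduce to $\sqrt{\kyfan/m}\,\E_{\ssigma}[\max_j |\bv_{1,j}^{\top}\ssigma|]$, and obtains the constant via $\E[\max_j|\bv_{1,j}^{\top}\ssigma|] \geq \max_j \E[|\bv_{1,j}^{\top}\ssigma|] \geq 2^{-1/2}$ by Khintchine's inequality with the sharp constant of Nazarov--Podkorytov; that is the sole source of the $1/2$ in the statement. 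Your deterministic route instead takes $r$ \emph{equal} top eigenvalues with coordinate-aligned eigenvectors, so that $\sum_{j=1}^r(\bv_{1,j}^{\top}\ssigma)^2 = \sum_{j=1}^r \sigma_j^2 = r$ identically, yielding $\h\R_S(H)=\sqrt{\kyfan/m}$ with no probabilistic inequality at all --- a factor $\sqrt{2}$ stronger than claimed, which of course suffices. Two caveats. First, you correctly flag that your displayed Jensen chain goes the wrong way ($\E[\sqrt{X}]\leq\sqrt{\E[X]}$ for concave $\sqrt{\cdot}$), so the deterministic construction is not an optional refinement but the only valid version of your argument; note also that your remark about calibrating the inner expectation to $1/2$ is misplaced --- for orthonormal $\bv_{1,j}$ one has $\E[(\bv_{1,j}^{\top}\ssigma)^2]=1$, and the $1/2$ in the theorem is a Khintchine artifact, not something your construction needs to reproduce. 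Second, ``a symmetric choice of subspace'' is too loose: $\ssigma^{\top}P\ssigma = r + \sum_{i\neq j}P_{ij}\sigma_i\sigma_j$ is constant over the hypercube only if $P$ is diagonal, i.e.\ only coordinate subspaces work, so you must genuinely build a (block-)diagonal kernel matrix on the sample and check the normalization $K(x,x)\leq 1$ and the compatibility of the $\|\Mu\|_1$ and $\sum_k \mu_k^{-1}$ constraints, which you do acknowledge. With those details pinned down, your argument is a valid and slightly sharper alternative to the paper's.
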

\begin{proof}
First we let $S$ and $U$ be any two samples, both of size
$m$, such that the $U$ is simply an unlabeled version of $S$.  Now, assume $p=1$ and the sample kernel matrix $\ovK_1$ of kernel $K_1$ has exactly $r$ distinct non-zero simple eigenvalues. Finally, select $\kyfan$ such that $ \kyfan / \lambda_1(\ovK_1) \leq 1$.

As calculated in Section \ref{sec:generalization}, $\sup_{\| \bw \|
\leq 1} \sum_{n = 1}^m \sigma_n h(x_{n})=\| \Pi_{U}\sum_{n =
1}^m \sigma_n \bPhi(x_{n})\|$ and in this particular scenario
$\|C_{U}\|_{(r)}=\|C_{S}\|_{(r)}$, thus the empirical Rademacher complexity simplifies to $
 \hat{\mathfrak{R}}_{S}(H)  = \frac{1}{m} \E_{\ssigma} \big[
   \sup_{\| C_{S} \|_{(r)} \leq \kyfan
}\| \Pi_{S} \sum_{n = 1}^m \sigma_n \bPhi(x_{n}) \| \big]$, where the projection can be written directly in terms of the
sample $S$ and the $L_1$ constraint on $\Mu$ is not needed since it is
satisfied by the Ky Fan $r$-norm constraint when $\kyfan \leq
\lambda_1(\ovK_1)$.

Now, let $u_1,\cdots,u_{r}$ denote the top $r$ eigenfunctions of
$C_{S}$, then following the steps from Lemma~\ref{lemma:inner_product_form} we can express the norm of projection as
\ignore{
\begin{equation}
\| \Pi_{S} \sum_{n = 1}^m \sigma_n \bPhi(x_{n})
\|=\sqrt{\sum_{i=1}^{r}\iprod{u_{i}}{\sum_{n = 1}^m \sigma_n
\bPhi(x_{n})}^{2}_{\mathbb{H}_{\mu K}}} \,.
\end{equation}
By \cite[Equation 18]{blanchard2008finite} $u_{i}$ takes the form
%\begin{equation}
$
  u_{i}=\frac{1}{\sqrt{\lambda_{i} m }}\big(\sum_{j=1}^{m}K(\cdot,x_{j})[\bv_{i}]_{j}\big)
  \,,
$
%\end{equation}
where $(\lambda_i,\bv_{i})$ and $(\lambda_i,u_{i})$ are the
eigenpairs of $\ovK$ and $C_{S}$ respectively.
Recall that $\bPhi(x_{n}) = K(\cdot,x_{n})$ and evaluate the inner
product $\iprod{u_{i}}{\sum_{n = 1}^m \sigma_n
\bPhi(x_{n})}_{\mathbb{H}_{K}}$ as follows
\begin{multline*}
\iprod{u_{i}}{\sum_{n = 1}^m \sigma_n K(\cdot , x_{n})}
 = \sum_{n = 1}^m \sigma_n \iprod{u_{i}}{ K(\cdot , x_{n})} 
 = \sum_{n = 1}^m \sigma_n u_{i}(x_{n}) \\
 = \sum_{n = 1}^m \sigma_n \frac{1}{\sqrt{\lambda_{i} m }}\Bigg(\sum_{j=1}^{m}K(x_{n},x_{j})[\bv_{i}]_{j}\Bigg) 
 = \sqrt{m} \sum_{n = 1}^m \sigma_n \frac{1}{\sqrt{\lambda_{i}
}}\Bigg(\sum_{j=1}^{m}\frac{1}{m}K(x_{n},x_{j})[\bv_{i}]_{j}\Bigg) \,.
\end{multline*}
Observe that $[\ovK]_{n,j}=\frac{1}{m}K(x_{n},x_{j})$, thus
$\sum_{j=1}^{m}\frac{1}{m}K(x_{n},x_{j})[\bv_{i}]_{j}=\lambda_{i}[\bv_{i}]_{n}$,
which shows
\begin{equation}
\iprod{u_{i}}{\sum_{n = 1}^m \sigma_n K(\cdot , x_{n})}=\sqrt{m}
\sqrt{\lambda_{i}} \sum_{n = 1}^m \sigma_n [\bv_{i}]_{n}=\sqrt{m}
\sqrt{\lambda_{i}} \ssigma^{\top}\bv_{i} \,.
\end{equation}
Using the above expression for the inner product,  we find the norm
of projection to be
}
\begin{align}
\| \Pi_{S} \sum_{n = 1}^m \sigma_n \bPhi(x_{n}) \| & =
  \sqrt{m \sum_{i=1}^r  \lambda_{i}(\ssigma^{\top}\bv_{i})^{2}} \\
 &  = \sqrt{m \sum_{j=1}^r  \mu_1
\lambda_{j}(\ovK_1)(\ssigma^{\top}\bv_{1,j})^{2}} \,,
\end{align}
where $(\mu_1 \lambda_{j}(\ovK_1),\bv_{1,j})$ is the eigenpair of
normalized sample kernel matrix $\mu_1 \ovK_1 = \ovK$. 
The expression is furthermore simplified by introducing the vectors
$\bu_{\mu}$ with entries $\mu_1\lambda_{j}(\ovK_1)$ and
$\bu_{\sigma}$ with entries $(\bv_{1,j}^{\top}\ssigma)^{2}$.  Note
that here, unlike in the general statement of
Lemma~\ref{lemma:inner_product_form}, the choice of $r$ entries that
appear in $\bu_\mu$ and $\bu_\sigma$ are not effected by the value of
$\Mu$, since there are in fact only $r$ non-zero eigenvalues total by
construction (i.e. there is one base kernel of rank $r$).  The choice
of $\Mu$, however, still affects the scale of the $r$ eigenvalues.

By the monotonicity of the square-root function and using the
definition of $\bu_\Mu$ as well as the dual norm we have
\begin{align}
  \sup_{\substack{\| C_{S} \|_{(r)} \leq \kyfan}} \sqrt{\bu_{\mu} \cdot
\bu_{\ssigma}}
 & = \sqrt{ \underset{\| \bu_{\Mu} \|_1 \leq \kyfan}{\sup}
\bu_{\mu} \cdot \bu_{\ssigma}} \\
& = \sqrt{\kyfan \Vert \bu_{\sigma} \Vert_{\infty}} \,.
\end{align} 
Thus, the Rademacher complexity is reduced to
\begin{align}
\begin{split}
\label{eq:reduced_lower}
\hat{\mathfrak{R}}_{S}(H)  
 & = \sqrt{\frac{\kyfan}{m}} \E_{\ssigma}
  \Big[ \sqrt{\max_{j \in [1,r]} (\bv^{\top}_{1,j} \ssigma)^2} \Big] \\
 & = \sqrt{\frac{\kyfan}{m}} \E_{\ssigma}
\Big[ \max_{j \in [1,r]}|\bv^{\top}_{1,j} \ssigma |\Big] \,.
\end{split}
\end{align}
Finally, we use Jensen's
inequality and Khintchine's inequality to show
\begin{align}
\E_{\ssigma}
\Big[ \max_{j \in [1,r]}|\bv^{\top}_{1,j} \ssigma  |\Big]
  & \geq \max_{j \in [1,r]} \E_{\ssigma}
\Big[ |\bv^{\top}_{1,j} \ssigma  |\Big] \\
  & \geq \max_{j \in [1,r]} 2^{-1/2} \| \bv_{1,j} \| = 2^{-1/2},
\end{align}
where the tight constant $2^{-1/2}$ used in Khintchine's inequality can
be found in Chapter II of \citep{nazarov2000}. Plugging this constant
back into equation ~\eqref{eq:reduced_lower} completes the theorem.
\qed
\end{proof}
The lower bound demonstrates the effect of the regularization
parameter $\kyfan$ as well as the tightness of the upper bound in
terms of $m$. 
\ignore{
We note the lower-bound does not demonstrate the
dependence on the number of base kernel $p$. On the other hand, the
$p$ appears only logarithmically in the upper bound and thus its
effect is relatively weak.
}

While Theorem~\ref{theorem:lower_bound} has shown the necessity of
the Ky Fan $r$-norm constraint, we will now give a small example that
illustrates the difference of projections (for example as seen in
Lemma~\ref{lemma:approximation of supremum}) must necessary depend on
the eigengap quantity.  This in turn motivates the regularization
$\sum_{k=1}^{p}\frac{1}{\mu_k} \leq \oneovermu$ which ensures that
eigengap is not arbitrarily small, since otherwise if $\Mu$ goes to
zero, then $\eigengap$ also goes to zero.  The fact that the eigengap
is essential for the concentration of projections has been known in
the matrix perturbation theory literature \citep{stewart1990matrix}.
The following proposition gives an example which shows that the
dependence on the eigengap is tight.
\begin{proposition}
There exists operators $A$ and $B$ such that
\begin{equation*}
\|P_{r}(A)-P_{r}(B)\|=\frac{2\|A-B\|}{\lambda_{r}(A)-\lambda_{r+1}(A)}
\,.
\end{equation*}
where $P_{r}(A)$ (resp. $P_{r}(B)$) is the orthogonal projection onto the top $r$ eigenspace of $A$ (resp. $B$).
\end{proposition}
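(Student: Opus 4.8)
The plan is to exhibit an explicit low-dimensional example, targeting the regime in which the perturbation rotates the top-$r$ eigenspace by a full right angle while keeping $\|A-B\|$ as small as the eigengap permits. I would work with $r=1$ on $\Rset^2$; the general-$r$ case follows by taking $r$ orthogonal copies of this block. The first ingredient I would recall is the standard fact that for two orthogonal projections $P,Q$ onto one-dimensional subspaces making principal angle $\theta$ one has $\|P-Q\| = \sin\theta$, so that $\|P-Q\| = 1$ exactly when the two ranges are orthogonal. This reduces the problem to producing an $A$ with a clean gap and a $B$ whose top eigendirection is orthogonal to that of $A$, with $\|A-B\|$ minimal.

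Concretely, I would set $A = \diag(1+a,\,1-a)$ with $a>0$, so the top eigenspace is $\mathrm{span}(e_1)$, $P_1(A)=e_1 e_1^\top$, and the eigengap is $\lambda_1(A)-\lambda_2(A)=2a$. For $B$ I would take $B=I$: its spectrum is degenerate, so \emph{any} line is a valid top-$1$ eigenspace, and I would select $\mathrm{span}(e_2)$, giving $P_1(B)=e_2 e_2^\top$. Then $\|P_1(A)-P_1(B)\| = \|e_1e_1^\top - e_2e_2^\top\| = 1$, while $A-B = \diag(a,-a)$ gives $\|A-B\| = a$. Substituting, $\tfrac{2\|A-B\|}{\lambda_1(A)-\lambda_2(A)} = \tfrac{2a}{2a} = 1 = \|P_1(A)-P_1(B)\|$, the claimed equality; the rank-$r$ statement is obtained verbatim on $\Rset^{2r}$ with $A=\diag(1+a,\dots,1+a,1-a,\dots,1-a)$ ($r$ copies each) and $B=I$, choosing the top-$r$ eigenspace of $B$ to be $\mathrm{span}(e_{r+1},\dots,e_{2r})$.

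The step I expect to be the crux, and the one a careful reader will scrutinize, is the use of the degenerate operator $B=I$, whose top-$r$ eigenspace is not canonically defined. I would justify the chosen projection by noting that the statement only requires $P_r(B)$ to be \emph{a} projection onto \emph{a} top-$r$ eigenspace, and that every $r$-dimensional subspace qualifies when the top eigenvalue is repeated. To show this is not an artifact but genuinely pins down the extremal constant, I would add the matching estimate in the $2\times 2$ Hermitian case: writing the top eigendirection of $B$ at angle $\theta$ and its eigenvalue gap as $m\ge 0$, centering the trace gives $\|A-B\| \ge \tfrac12\sqrt{m^2 - 2m(2a)\cos 2\theta + (2a)^2}$, whence the ratio $\|P_1(A)-P_1(B)\|\,(2a)/\|A-B\| = 2a\sin\theta/\|A-B\|$ is maximized as $m\to 0$ and $\theta\to \pi/2$, with supremum exactly $2$. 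This shows that a nonzero gap in $B$ forces strict inequality, that the constant $2$ cannot be improved, and that equality is attained precisely at the degenerate configuration above, thereby both proving the proposition and explaining why the eigengap enters the projection-perturbation bound of Lemma~\ref{lemma:approximation of supremum} in exactly the way it does.
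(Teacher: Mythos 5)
Your construction is essentially the same in spirit as the paper's --- a $2\times 2$ diagonal example in which the top eigendirection of $B$ is orthogonal to that of $A$ --- but the specific matrices and the bookkeeping differ in ways worth noting. The paper takes $A=\diag(1+\epsilon,1)$ and $B=\diag(1,1+\epsilon)$, both with simple spectra, and reaches the value $2$ on the left-hand side by writing $\|P^{1}(A)-P^{1}(B)\|=\|P^{1}(A)\|+\|P^{1}(B)\|=2$; that identity holds for the trace norm of a difference of orthogonal rank-one projections but not for the operator norm (under which this difference has norm $1$), whereas $\|A-B\|=\epsilon$ is an operator-norm computation, so the paper's verification silently mixes two norms. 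Your version keeps a single consistent norm throughout: with $A=\diag(1+a,1-a)$, gap $2a$, and $\|A-B\|=a$, both sides equal $1$, and your closing supremum computation (showing the ratio is maximized only as the gap of $B$ degenerates) is a genuine addition that the paper does not contain and that sharpens the point of the proposition.

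The one step you correctly identify as contestable is the choice $B=I$: the ``top-$1$ eigenspace'' of the identity is not well defined, and the paper's own standing convention (footnote in Section~\ref{sec:Learning scenario}) requires $\lambda_{r}\neq\lambda_{r+1}$ precisely so that $P_{r}$ is unambiguous. Your defense --- that any line is a valid top eigenspace of $I$ --- is reasonable for an existence statement, but it does put your example outside the regime in which $P_{r}$ is canonically defined, and your own sharpness analysis shows the degeneracy is unavoidable if one insists on exact equality with the constant $2$ in the operator norm: any nondegenerate $B$ forces strict inequality. So either one accepts the degenerate $B$ (your route), or one reads the norms as the paper implicitly does (trace norm on projections, operator norm on $A-B$) and uses nondegenerate matrices. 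You should state explicitly which convention you adopt; with that clarification your argument is complete and, if anything, more careful than the paper's.
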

\begin{proof}
Consider $r=1$ and $A$ and $B$ defined as follows
$A=\bigl(\begin{smallmatrix}
1 + \epsilon &0\\ 0 & 1
\end{smallmatrix} \bigr)$ and $B=\bigl(\begin{smallmatrix}
1 &0\\ 0& 1 + \epsilon
\end{smallmatrix} \bigr)$, thus $A-B=\bigl(\begin{smallmatrix}
\epsilon &0\\ 0&-\epsilon
\end{smallmatrix} \bigr)$, which implies that $\|A-B\|=\epsilon$. Also,
the 
eigengap is equal to $\lambda_1(A)-\lambda_{2}(A)=\epsilon$. Now
note that 
$P^{1}(A)$ is the projection onto $e_1=(1,0)^{\top}$
and $P^{1}(B)$ is the projection onto $e_2=(0,1)^{\top}$. Since $e_1$
and $e_2$ are orthogonal, this implies
$\|P^{1}(A)-P^{1}(B)\|=\|P^{1}(A)\|+\|P^{1}(B)\|=2$. On the
other hand, $\frac{2
\|A-B\|}{\lambda_1(A)-\lambda_{2}(A)}=\frac{2\epsilon}{\epsilon}=2$,
which complete the proof. \qed
\end{proof}

Assume that operator $C$ is defined together with a positive $\mu \in \Rset^{d}$ , then the stability of the r-eigenspace of
$\mu C$ is determined by $\frac{1}{\mu}
\frac{1}{\lambda_{r}(C)-\lambda_{r+1}(C)}$. When we have operators
$C_1,\dots,C_{p}$, with identical spectra such that
$\lambda_{r}(C_k)-\lambda_{r+1}(C_k)=\epsilon$ for each $k$ in
$[1,p]$, but each of them acting on mutually orthogonal subspaces, the
stability of the r-eigenspace of $C=\sum_{k=1}^{p}\mu_kC_k$ is
determined by
$\sum_{k=1}^{p}\frac{1}{\mu_k}\frac{1}{\lambda_{r}(C_k)-\lambda_{r+1}(C_k)}=\frac{1}{\epsilon}\sum_{k=1}^{p}\frac{1}{\mu_k}$.

This example clearly shows that $\sum_{k=1}^{p}\frac{1}{\mu_k}$
controls the stability of eigenspace when operators act on orthogonal
subspaces, which directly applies to linearly independent kernels.

\section{Discussion}
\label{sec:discussion}

Here, we briefly discuss the results presented.  Let us first
emphasize that our choice of the hypothesis class $H$
(Section~\ref{sec:Learning scenario}) is strongly justified a
posteriori by the learning guarantees we presented: both our upper and
lower bounds on the Rademacher complexity
(Sections~\ref{sec:generalization} and \ref{sec:lower bound}) suggest
controlling the quantities present in the definition of $H$. The
regularization parameters we provide can be tuned to directly bound
each of these crucial quantities and thereby limit the risk of
over fitting.

Second, we observe that the hypothesis class $H$ clearly motivates the
design of a single-stage coupled algorithm.  Such an algorithm would
be based on structural risk minimization (SRM) and seek to minimize
the empirical error over increasingly complex hypothesis sets, by
varying the parameters $\kyfan$ and $\oneovermu$, to trade-off
empirical error and model complexity.  Although the design and
evaluation of such an algorithm is beyond the scope of this paper, we
note that existing literature has empirically evaluated both learning
kernels with KPCA in an unsupervised (two-stage) fashion
\citep{zhuang2011,lin2011multiple} and applied dimensionality reduction
(single-stage training) with a fixed kernel function
\citep{fukumizu2004dimensionality}, \citep{gonen2014coupled}.  While these
existing algorithms do not directly consider the hypothesis class we
motivated, they can, in certain cases, still select a hypothesis
function that is found in our class. In particular, our learning
guarantees remain applicable to hypotheses chosen in a two-stage
manner, as long as the regularization constraints are
satisfied. Similarly the case $p = 1$ which corresponds to the
standard fixed-kernel supervised learning scenario is covered by our
analysis. We note that even in such cases, the bounds that we provide
would be the first to guarantee the generalization ability of the
algorithm via bounding the sample Rademacher complexity.

 \section{Algorithm}
\label{sec:algorithm}

In this section we obtain a computational expression for $h(x)$, where $x \in \Rset^{d}$ and $h \in H$. Moreover, we formulate a minimization problem for training $\Mu$ and $\bw$ as well as discuss ways to efficiently solve it by breaking into a series of convex sub-problems.

For the clarity of presentation we assume $\Pi_{U}$ is full rank and provide the expression for $h \in \Hset$ in the following lemma.
\begin{lemma}
\label{lemma:compute_h}
Let $x\in\Rset^{d}$, then for every $h\in H$ there exist real numbers $\{z_{k,j}\}_{k\in[1,p],j\in[1,m]}$ such that

\begin{equation}
h(x)=\sum_{k,j}\xi_{k,j}(\Mu)z_{k,j}\sqrt{\frac{\mu_k}{\lambda_{j}(\bK_k)}}\sum_{n=1}^{m}K_k(x,x_{n})[\bv_{k,j}]_{n}
\end{equation}

where 
\begin{equation}
\xi_{k,j}(\Mu)=
\begin{cases}
&1 \textit{ if } \mu_{k}\lambda_{j}(\bK_{k}) \textit{ in top r from  } \{\mu_{k}\lambda_{j}(\bK_{k})\}_{k,j}\\
&0 \textit{ otherwise }
\end{cases}
\end{equation}

with the following constraints
\begin{equation}
\sum_{k,j}z_{k,j}^{2} \leq 1
\end{equation}

\begin{equation}
\frac{1}{m}\sum_{k,j}\xi_{k,j}(\Mu)\mu_k\lambda_{j}(\bK_k) \leq \kyfan
\end{equation}

\begin{equation}
\sum_{k}\mu_{k} \leq 1
\end{equation}

\begin{equation}
\sum_{k}\frac{1}{\mu_{k}} \leq \nu
\end{equation}

\begin{equation}
\mu_{k} \geq 0
\end{equation}
\end{lemma}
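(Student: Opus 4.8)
The plan is to expand any $h\in H$ in the orthonormal eigenbasis of the covariance operator and simply read off the coefficients, which will be the $z_{k,j}$, together with the constraints inherited from $\|\bw\|_{\Hset}\le1$ and $\Mu\in\muset$. Because the base kernels are linearly independent with respect to the sample (Definition~\ref{def:lin-indep-kernels}), the direct-sum decomposition $\ov\Hset=\bigoplus_{k=1}^{p}\ov\Hset_k$ from Section~\ref{sec:Learning scenario} applies, so the nonzero eigenfunctions of $C_{U}$ are exactly the eigenfunctions of the restrictions $C_{U}|_{\ov\Hset_k}$, and the corresponding eigenvalues are the $\mu_k\lambda_j(\ovK_k)$. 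First I would record the explicit eigenfunction form: if $(\lambda_j(\bK_k),\bv_{k,j})$ is an eigenpair of $\bK_k$, then, following the computation used in the proof of Theorem~\ref{theorem:lower_bound} and Lemma~\ref{lemma:inner_product_form} (see also \citep{blanchard2008finite}) extended to the weighted, multi-kernel case, $u_{k,j}=\frac{1}{\sqrt{\mu_k\lambda_j(\bK_k)}}\sum_{n=1}^{m}[\bv_{k,j}]_n\,\bPhi_k(x_n)$ is a normalized eigenfunction of $C_{U}$. I would verify the normalization directly from $\langle\bPhi_k(x_n),\bPhi_k(x_{n'})\rangle_{\Hset}=\mu_kK_k(x_n,x_{n'})$ and $\bv_{k,j}^{\top}\bK_k\bv_{k,j}=\lambda_j(\bK_k)$; the full-rank hypothesis guarantees that $j$ ranges over $[1,m]$ for every $k$, so these $pm$ functions form an orthonormal basis of $\ov\Hset$.

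Next, since $\Pi_{U}$ is self-adjoint, I would rewrite $h(x)=\iprod{\bw}{\Pi_{U}\bPhi(x)}_{\Hset}=\iprod{\Pi_{U}\bw}{\bPhi(x)}_{\Hset}$, so that only the component of $\bw$ lying in the range of $\Pi_{U}$ matters. The range of $\Pi_{U}=P_r(C_{U})$ is the span of the top-$r$ eigenfunctions, i.e.\ those $u_{k,j}$ for which $\mu_k\lambda_j(\bK_k)$ is among the $r$ largest values of $\{\mu_k\lambda_j(\bK_k)\}_{k,j}$ — precisely the indices selected by $\xi_{k,j}(\Mu)$. Expanding in the orthonormal basis then gives $\Pi_{U}\bw=\sum_{k,j}\xi_{k,j}(\Mu)\,z_{k,j}\,u_{k,j}$ for real coefficients $z_{k,j}$, where the $z_{k,j}$ for non-selected pairs may be set to zero.

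I would then compute the single inner product $\iprod{u_{k,j}}{\bPhi(x)}_{\Hset}$. Writing $\bPhi(x)=\sum_{l=1}^{p}\bPhi_l(x)$ and using that $u_{k,j}\in\ov\Hset_k$ while the subspaces associated to distinct kernels are mutually orthogonal (so every cross term with $l\neq k$ vanishes), together with $\langle\bPhi_k(x_n),\bPhi_k(x)\rangle_{\Hset}=\mu_kK_k(x_n,x)$, yields $\iprod{u_{k,j}}{\bPhi(x)}_{\Hset}=\sqrt{\mu_k/\lambda_j(\bK_k)}\sum_{n=1}^{m}K_k(x,x_n)[\bv_{k,j}]_n$. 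Substituting into $h(x)=\sum_{k,j}\xi_{k,j}(\Mu)z_{k,j}\iprod{u_{k,j}}{\bPhi(x)}_{\Hset}$ gives the claimed formula. The constraints follow by translation: orthonormality of the $u_{k,j}$ gives $\|\Pi_{U}\bw\|^2=\sum_{k,j}\xi_{k,j}(\Mu)z_{k,j}^2\le\|\bw\|^2\le1$, hence $\sum_{k,j}z_{k,j}^2\le1$; the eigenvalue representation $\|\Mu\|_{(r)}=\|C_{U}\|_{(r)}=\sum_{i=1}^{r}\lambda_i(C_{U})=\frac1m\sum_{k,j}\xi_{k,j}(\Mu)\mu_k\lambda_j(\bK_k)$ turns $\|\Mu\|_{(r)}\le\kyfan$ into the Ky-Fan constraint; and $\sum_k\mu_k\le1$, $\sum_k 1/\mu_k\le\nu$, $\mu_k\ge0$ are exactly the remaining defining inequalities of $\muset$.

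The step I expect to be the main obstacle is justifying that the cross terms $\iprod{u_{k,j}}{\bPhi_l(x)}_{\Hset}$ vanish for $l\neq k$ at an \emph{arbitrary} test point $x$ rather than only at sample points: this requires the orthogonality of the per-kernel subspaces to extend from the sample-spanned $\ov\Hset_l$ to the individual feature vectors $\bPhi_l(x)$, which is where the linear-independence assumption and the Aronszajn direct-sum structure of $\Hset=\Hset_1+\cdots+\Hset_{p}$ \citep{aronszajn1950theory} are essential (and where one verifies, as in the Gaussian-versus-linear example, that the relevant spaces intersect trivially). A secondary point requiring care is the consistent handling of the index set selected by $\xi_{k,j}(\Mu)$ under the full-rank assumption, so that the top-$r$ projection, the Ky-Fan norm, and the coefficient vector all refer to the same $pm$ eigenpairs.
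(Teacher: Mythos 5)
Your proposal is correct and follows essentially the same route as the paper's proof: expand $h$ in the eigenbasis of the covariance operator (using the direct-sum decomposition $\ov\Hset=\bigoplus_{k=1}^{p}\ov\Hset_k$ afforded by linear independence and the explicit eigenfunction formula from Lemma~\ref{lemma:inner_product_form}), identify the top-$r$ selection with $\xi_{k,j}(\Mu)$, and translate $\|\bw\|_{\Hset}\le 1$ and $\Mu\in\muset$ into the stated constraints. If anything, your write-up is more careful than the paper's on the normalization of the $u_{k,j}$ and on why the cross terms $\iprod{u_{k,j}}{\bPhi_l(x)}$ vanish at an arbitrary test point, issues the paper passes over silently.
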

\begin{proof}
First, to obtain the computational expression for $h(x_{n})=\iprod{\bw}{\Pi_U \bPhi(x)}_\Hset$, for the moment imagine that $\Pi_{U}$ is of full rank, then if $z_{1}(w),\cdots,z_{mp}(w)$ are the coordinates of $w$ in the span of eigenfunctions of $C_{U}$ and $z_{1}(\Pi_U \bPhi(x)),\cdots,z_{mp}(\Pi_U \bPhi(x))$ are the coordinates of $\Pi_U \bPhi(x)$ in that span, we will have $\iprod{\bw}{\Pi_U \bPhi(x)}_\Hset=\sum_{j=0}^{mp}z_{j}(w)z_{j}(\Pi_U \bPhi(x))$. Now, when we go back to the original scenario of rank $r$ projection $\Pi_{U}$, we introduce choice variables $\xi_{1},\cdots,\xi_{mp} \in {0,1}$, where $\xi_{i}=1$ if the $i$-th eigenfunction is chosen for projection and $\xi_{i}=0$ otherwise. Thus, the expression for $h(x)$ becomes
\begin{equation}
h(x)=\sum_{j=0}^{mp}\xi_{j}z_{j}(w)z_{j}(\Pi_U \bPhi(x))
\end{equation}

The assumption of linearly independent kernels allows us to break the sum above into $p$ components for each base kernel. We will do it by keeping two indices $k\in[1,p]$ and $j\in[1,m]$, which gives $h(x)=\sum_{k,j}\xi_{k,j}z_{k,j}(w)z_{k,j}(\Pi_U \bPhi(x))$. Observe that $z_{k,j}(\Pi_U \bPhi(x)) = \iprod{\bPhi(x)}{u_{k,j}}=u_{k,j}(x)$ and the steps in the proof of Lemma~\ref{lemma:inner_product_form} show that 
\begin{equation}
u_{k,j}(x)=\sqrt{\frac{\mu_k}{\lambda_{j}(\bK_k)}}\sum_{n=1}^{m}K_k(x,x_{n})[\bv_{k,j}]_{n}
\end{equation}
Moreover, varying $w$ in $\Hset$ for the purpose of our algorithm is equivalent to varying its coordinates $z_{k,j}(w)$, thus we will use variables $z_{k,j}$ instead of them with the constraint $\sum_{k,j}z_{k,j}^{2} \leq 1$. Given all this analysis, the computational expression for $h(x)$ becomes

The optimization variables are $z_{k,j}$ and $\mu_{k}$, while $\xi_{k,j}$ is determined by $\Mu$.
\qed
\end{proof}

When we define $\bz$ as vector with entries $z_{k,j}$ and have some convex loss function over the training sample $L(\Mu,\bz)$, the optimization problem is
\begin{equation}
\min_{\Mu,\bz}L(\Mu,\bz)
\end{equation}

subject to
\begin{equation}
\|\bz\| \leq 1
\end{equation}

\begin{equation}
\Mu \in \muset
\end{equation}

Note that the loss function includes the complicated term 
\begin{equation}
\sum_{k,j}\xi_{k,j}(\Mu)z_{k,j}\sqrt{\frac{\mu_k}{\lambda_{j}(\bK_k)}}\sum_{n=1}^{m}K_k(x,x_{n})[\bv_{k,j}]_{n}\,.
\end{equation}

For conciseness, define
\begin{equation}
c_{k,j}(x)=\sqrt{\frac{1}{\lambda_{j}(\bK_k)}}\sum_{n=1}^{m}K_k(x,x_{n})[\bv_{k,j}]_{n} \,,
\end{equation}
which allows us to write clearly $h(x,\bz,\Mu)=\sum_{k,j}\xi_{k,j}c_{k,j}(x)z_{k,j}\sqrt{\mu_{k}}$. We will make a substitution $w_{k,j}=z_{k,j}\sqrt{\mu_{k}}$. That substitution changes constraint $\|\bz\|\leq 1$ to $\sum_{k,j}\frac{w_{k,j}^{2}}{\mu_{k}}\leq 1$, which is a convex set in $\bw$ and $\Mu$ for $\mu_{k} > 0$.

This reduces the problem to the following constrained optimization problem
\begin{equation}
\min \frac{1}{n}\sum_{i=1}^{n}L\Bigg(\sum_{k,j}\xi_{k,j}c_{k,j}(x_{n})w_{k,j},y_{n}\Bigg)
\end{equation}subject to  
\begin{equation}
\sum_{k,j}\frac{w_{k,j}^{2}}{\mu_{k}}\leq 1
\end{equation}
and
\begin{equation}
\Mu \in\muset
\end{equation}

There are at least two possible ways to relax the problem. First, we can relax $\xi_{k,j}(\Mu)$ to no longer be a function of $\Mu$ and make $\xi_{k,j}\in\{0,1\}$ a discrete optimization variable instead with an additional constraint $\sum_{k,j}\xi_{k,j}=r$. The second option is even weaker: we can let $\xi_{k,j}$ be a continuous variable in the interval $[0,1]$ and keep the constraint $\sum_{k,j}\xi_{k,j}=r$. Investigation of those relaxation steps is a direction for further research.

\ignore{As can be seen, for a convex function $L$, provided that the choice variables $\xi_{k,j}$ are fixed, the problem above is convex minimization. First, consider the constraint set. The function $\sum_{k,j}\frac{w_{k,j}^{2}}{\mu_{k}}$ is convex as a sum of convex functions on $\Mu \in \muset$ and $w_{k,j} \in \Rset$. Note that if you take a separate function e.g. $f=\frac{w_{1,1}}{\mu_{1}}$, its Hessian will be 
\begin{equation} \left(
\begin{matrix} \frac{2}{\mu_{1}} & -\frac{2w_{1,1}}{\mu_{1}^{2}}\\
 -\frac{2w_{1,1}}{\mu_{1}^{2}} & \frac{2w_{1,1}^{2}}{\mu_{1}^{3}}
\end{matrix} \right)
\end{equation}
which is positive semidefinite on $\mu_{1}>0$, which yields an convex function. Thus, the constraint set that we use is an intersection of convex sets, so our constraint is convex. Next, $\sum_{k,j}\xi_{k,j}c_{k,j}(x_{n})w_{k,j}$ is a linear function in $\bw$, Thus for a convex function $L$ the objective $\frac{1}{n}\sum_{i=1}^{n}L\Bigg(\sum_{k,j}\xi_{k,j}c_{k,j}(x_{n})w_{k,j},y_{n}\Bigg)$ is convex.}

\section{Conclusion}

We presented a new analysis and generalization guarantees for the
scenario of coupled nonlinear dimensionality reduction with a learner kernel.  The hypothesis
class is designed with regularization constraints that are directly
motivated by the generalization guarantee, which we show lower bounds
for as well.  Our analysis invites the design of learning
algorithms for selecting hypotheses from this specifically tailored
class, either in a two-stage or a single-stage manner.

\bibliographystyle{apalike}
%\bibliography{arxiv_skpca}

\appendix

\section{Proof of Lemma ~\ref{lemma:approximation of supremum}}
\begin{proof}
For the first part of the proof, let $C_{\mu_{k}}$ be the true covariance operator of kernel $\mu_{k}K_{k}$. Since for each $k$ both $C_{S,k}$ and $C_{U,k}$ approach $C_{\mu_{k}}$ with high probability, we will show a concentration bound on their difference that holds uniformly over $k \in[1,p]$ as well as $U$ and $S$. Using union bound for probabilities and Lemma~1 from \citep{zwald2006convergence} (equivalently Corollary~5 from \citep{shawe2003estimating}) with probability at least $1-\delta$ for all $k\in[1,p]$, 
\begin{equation}
\label{eq:concentration_of_covariance_operators}
\max\big[\|C_{\mu_{k}}-C_{S,k}\|_{\ov\Hset_{k}},\|C_{\mu_{k}}-C_{U,k}\|_{\ov\Hset_{k}}\big] \leq 2\mu_{k}M_{\delta} / \sqrt{m}\,,
\end{equation}
where $M_{\delta}=1+\sqrt{\frac{\log{(2p/\delta)}}{2}}$. We used $u > m$ to obtain the bound.

By triangle inequality and decomposition over orthogonal subspaces of $\ov\Hset=\bigoplus_{k=1}^{p}\ov\Hset_{k}$, the norm $\|\Pi_{S}-\Pi_{U}\|$ is bounded by $\sum_{k=1}^{p}\|P_{r}(C_{\mu_k})-P_{r}(C_{S,k})\|_{\ov\Hset_k}+\sum_{k=1}^{p}\|P_{r}(C_{\mu_k})-P_{r}(C_{U,k})\|_{\ov\Hset_k}$. By Theorem~3\footnote{
Note that the actual theorem we reference works
provided that $2\|C_{\mu_k}-C_{S,k}\|_{\ov\Hset_{k}} / \big(\lambda_{r}(C_{\mu_k})-\lambda_{r+1}(C_{\mu_k})\big) \leq 1/2 $, which stems from the convergence of power series used in the proof. Here, for the simplicity of presentation we multiply their bound by 4, which relaxes such a requirement} from \citep{zwald2006convergence} for each $k\in[1,p]$,
\begin{equation}
\label{eq:projection_uniform_bound}
\|P_{r}(C_{\mu_k})-P_{r}(C_{S,k})\|_{\ov\Hset_k} \leq \frac{8\|C_{\mu_k}-C_{S,k}\|_{\ov\Hset_{k}}}{\lambda_{r}(C_{\mu_k})-\lambda_{r+1}(C_{\mu_k})}\,,
\end{equation}
and a similar statement holds for projeciton with respect to sample $U$. 

We will use $\|\Mu\|_{1}\leq 1$ to make the bound in~\eqref{eq:concentration_of_covariance_operators} be $M_{\delta} / \sqrt{m}$ and we will decompose $\lambda_{r}(C_{\mu_k})-\lambda_{r+1}(C_{\mu_k})=\mu_{k}\big(\lambda_{r}(C_{k})-\lambda_{r+1}(C_{k})\big)\geq \mu_{k}\gapstar$, where $C_{k}$ is the true covariance operator of kernel $K_{k}$ and $\gapstar=\min_{k\in[1,p]}\big(\lambda_{r}(C_k)-\lambda_{r+1}(C_k)\big)$. Now $2M_{\delta} / \sqrt{m} \mu_{k}\gapstar$ is the uniform bound on the norm of projections in~\eqref{eq:projection_uniform_bound}. Summing up $\|P_{r}(C_{\mu_k})-P_{r}(C_{S,k})\|_{\ov\Hset_k}+\|P_{r}(C_{\mu_k})-P_{r}(C_{U,k})\|_{\ov\Hset_k}$ over $k$ and applying the uniform bound $2M_{\delta} / \sqrt{m}\mu_{k}\gapstar$, which holds for both samples $U$ and $S$, we conclude
\begin{equation}
\|\Pi_{S}-\Pi_{U}\| \leq \sum_{k=1}^{p}\frac{32M_{\delta}}{\mu_{k}\gapstar\sqrt{m}} \leq \frac{32M_{\delta}\oneovermu}{\gapstar\sqrt{m}}
\end{equation}
 
For the last part we use a simple series of inequalities to get
\begin{align}
| \| C_{U} \|_{(r)}-\| C_{S} \|_{(r)}| & \leq \sum_{i=1}^{r}|\lambda_{i}(C_{U})-\lambda_{i}(C_{S})| \\
& \leq \sqrt{r}\bigg(\sum_{i=1}^{r}|\lambda_{i}(C_{U})-\lambda_{i}(C_{S})|^{2}\bigg)^{1/2}\,,
\end{align}
 which is in turn bounded by $\sqrt{r}\|C_{U}-C_{S}\|$ using Hoffman-Wielandt inequality. Now, $\|C_{U}-C_{S}\|$ is simply bounded by $\sum_{k=1}^{p}\|C_{\mu_{k}}-C_{S,k}\|_{\ov\Hset_{k}}+\sum_{k=1}^{p}\|C_{\mu_{k}}-C_{U,k}\|_{\ov\Hset_{k}}$. If we apply the uniform bound from~\eqref{eq:concentration_of_covariance_operators} in the form $\mu_{k}M_{\delta} / \sqrt{m}$, we get that with probability at least $1-\delta$
\begin{equation}
| \| C_{U} \|_{(r)}-\| C_{S} \|_{(r)}| \leq \sum_{k=1}^{p}\frac{\sqrt{r}\mu_{k}4M_{\delta}}{\sqrt{m}} \leq 4M_{\delta}\,.
\end{equation}

Putting together the two main bounds, we have that:
\begin{equation}
\sup_{\Mu \in \muset }\| \Pi_{U}u\|
 \leq \sup_{ \Mu \in \nuset}\| \Pi_{U}u\| 
 \leq \sup_{ \Mu \in \nuset}  \Big(\|
\Pi_{S}u\| + \frac{32M_{\delta}\oneovermu }{\eigengap\sqrt{m}}
\| u \| \Big) \,,
\end{equation}
\qed
\end{proof}

\ignore{
\section{Lemma ~\ref{lemma:Computation of inner product via kernel function} with proof}

\begin{lemma}
\label{lemma:Computation of inner product via kernel function}
\textbf{Computation of inner product via kernel function.} 
Let $K$ be some bounded PDS kernel and $\mu \in \mathbb{R}^{+}$ be
some nonnegative scalar. Denote $\Hset_{\mu K}$ the reproducing space
of kernel function $\mu K$. Given a sample $S=x_1,...x_{m}$ from
some distribution on manifold $\mathcal{X}$ denote by $C_{S}:
\Hset_{\mu K} \to \Hset_{\mu K}$ the sample covariance operator
corresponding to random element $\bPhi(x)$ and sample $S$, where
$\bPhi(x)$ is the standard feature map $\mathcal{X} \to \Hset_{\mu
  K}$. Let $\ovK$ be the normalized sample kernel matrix of kernel
$K$ on sample $S$, i.e. $\ovK_{i,j}=\frac{1}{m}K(x_{i},x_{j})$.

If $u$ is a unit norm eigenfunction of $C_{S}$ with eigenvalue
$\gamma$, then for any $x \in \mathcal{X}$, the inner product of
$\bPhi(x)$ with $u$ can be expressed as follows:
\begin{equation}
\big\langle u, \bPhi(x) \big\rangle_{\Hset_{\mu
K}}=u(x)=\frac{\sqrt{\mu}}{\sqrt{\gamma m}}\Bigg(\sum_{i=1}^{m}K(x,x_{i})[\bv]_{i}\Bigg)\,
\end{equation}
where $\bv$ is a unit norm eigenvector of $\ovK$ with eigenvalue $\gamma$.
\end{lemma}

\begin{proof}
The spectra of $\ovK$
and $C_{S}$ are identical up to the zero eigenvalues and by \cite[Equation 18]{blanchard2008finite} $u$ takes the
form
\begin{equation}
  u=\frac{1}{\sqrt{\gamma m }}\Bigg(\sum_{i=1}^{m}K(\cdot,x_{i})[\bv]_{i}\Bigg)
  \,.
\end{equation}
This implies that at any point $x \in \mathcal{X}$ the function $u$ takes the following value
\begin{equation}
  u(x)=\frac{1}{\sqrt{\gamma m }}\Bigg(\sum_{i=1}^{m}K(x,x_{i})[\bv]_{i}\Bigg)
  \,.
  \end{equation}
Now, if $K=\mu K$, then eigenvectors are unchanged, but eigenvalues
are scaled by $\mu$, thus the eigenfunction $u$ of covariance operator $C_{S}$ in space $\Hset_{\mu K}$ generated by $\mu K$ is expressed as
\begin{equation}
  u(x)
  =\frac{1}{\sqrt{\mu \gamma m }}\Bigg(\sum_{i=1}^{m}\mu K(x,x_{i})[\bv]_{i}\Bigg)
  =\frac{\sqrt{\mu}}{\sqrt{\gamma m}}\Bigg(\sum_{i=1}^{m}K(x,x_{i})[\bv]_{i}\Bigg)
  \,.
  \end{equation}
  
  To prove that the expression above guarantees $\Vert u  \Vert_{\mathbb{H}_{\mu K}}=\Vert \bv \Vert_{\mathbb{R}^{m}}=1$ consider the following
  \begin{align*}
    \iprod{u}{u}_{\mathbb{H}_{\mu K}}
  & = \iprod{\frac{\sqrt{\mu}}{\sqrt{\gamma m}}\Bigg(\sum_{i=1}^{m}K(x,x_{i})[\bv]_{i}\Bigg)}{\frac{\sqrt{\mu}}{\sqrt{\gamma m}}\Bigg(\sum_{i=1}^{m}K(x,x_{i})[\bv]_{i}\Bigg)}_{\mathbb{H}_{\mu K}} \\
  & = \frac{\mu}{\gamma m} \sum_{i,j=1}^{m}[\bv]_{i}[\bv]_{j}\iprod{K(x,x_{i})}{K(x,x_{j})}_{\mathbb{H}_{\mu K}}\,.
  \end{align*}
  
  By \cite[Lemma 1]{cortes2012ensembles} 
  \begin{equation}
  \iprod{K(x,x_{i})}{K(x,x_{j})}_{\mathbb{H}_{\mu K}}=\frac{1}{\mu}\iprod{K(x,x_{i})}{K(x,x_{j})}_{\mathbb{H}_{K}}=\frac{1}{\mu}K(x_{i},x_{j}),
  \end{equation}
   thus $\mu$ cancels out and the expression becomes
     \begin{align*}
    \iprod{u}{u}_{\mathbb{H}_{\mu K}}
  & = \frac{1}{\gamma } \sum_{i,j=1}^{m}[\bv]_{i}[\bv]_{j}\frac{1}{m}K(x_{i},x_{j})\\
  & = \frac{1}{\gamma}\bv^{\top}\ovK\bv\\
  & = \bv^{\top}\bv \,.
  \end{align*}
  simply because $\bv$ is an eigenvector of $\ovK$. Thus, it is clearly seen that the given expression for $u$ ensures that $\Vert u  \Vert_{\mathbb{H}_{\mu K}}=\Vert \bv \Vert_{\mathbb{R}^{m}}$, so when $\bv$ is unit norm, then $u$ is unit norm as well.
  
Finally, the reproducing property in $\Hset_{\mu K}$ directly gives
$\big\langle u,\bPhi(x) \big\rangle_{\Hset_{\mu K}}=u(x)$. \qed
\end{proof}
}

\section{Lemma~\ref{lemma:inner_product_form} with proof}
\begin{lemma}
\label{lemma:inner_product_form}
Let $C_{S,k}$ be the sample covariance operator of kernel $\mu_kK_k$. Define $C_{S}$ (resp.
$C_{U}$) as $C_{S}=\sum_{k=1}^{p}C_{S,k}$ and $\Pi_{S}$ (resp.
$\Pi_{U}$) be the orthogonal projection onto the eigenspace of
$\lambda_{i}(C_{S})$ for $i \in [1,r]$. Let $\bv_{k,j}$ be the eigenvector corresponding to $\lambda_j(\ovK_k)$.
If $I_{\mu}$ is an indexing set
that contains pairs $(k,j)$ that correspond to largest $r$ eigenvalues
from the set $\{\mu_k\lambda_{j}(\ovK_k)\}_{k,j}$, then 
\begin{align}
\| \Pi_{S} \sum_{n = 1}^m \sigma_n \bPhi(x_{n}) \|^{2}
& = m\sum_{(k,j) \in I_\Mu} \mu_k \lambda_j(\ovK_k)
 (\bv_{k,j}^\top \ssigma)^2 \\
& = m\bu_{\mu} \cdot \bu_{\sigma} \,,
\end{align}
where $\bu_{\mu}$ is a vector with entries
$\mu_k\lambda_{j}(\ovK_k)$, and $\bu_{\sigma}$ is a vector with entries
$(\bv_{k,j}^{\top}\ssigma)^{2}$ such that $(k,j) \in I_{\mu}$.
\end{lemma}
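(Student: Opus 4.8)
The plan is to diagonalize $C_S$ explicitly using the orthogonal decomposition $\ov\Hset = \bigoplus_{k=1}^p \ov\Hset_k$ guaranteed by the linear independence of the base kernels, and then expand the squared norm of the projection in the resulting eigenbasis. First I would argue that the eigenfunctions of $C_S = \sum_{k=1}^p C_{S,k}$ are precisely the union over $k$ of the eigenfunctions of the individual operators $C_{S,k}$. Indeed, if $u_{k,j} \in \ov\Hset_k$ is the eigenfunction of $C_{S,k}$ associated with eigenvalue $\mu_k \lambda_j(\ovK_k)$, then for any $l \neq k$ the operator $C_{S,l} = \frac{1}{m}\sum_n \bPhi_l(x_n) \otimes \bPhi_l(x_n)$ annihilates $u_{k,j}$, since each $\bPhi_l(x_n) \in \ov\Hset_l$ is orthogonal to $u_{k,j} \in \ov\Hset_k$. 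Hence $C_S u_{k,j} = C_{S,k} u_{k,j} = \mu_k \lambda_j(\ovK_k)\, u_{k,j}$, and the family $\{u_{k,j}\}_{k,j}$ is an orthonormal eigenbasis of $C_S$ on $\ov\Hset$ with eigenvalues $\{\mu_k \lambda_j(\ovK_k)\}_{k,j}$. By the definition of $I_\mu$ as the index set of the $r$ largest such eigenvalues (well defined under the assumption $\lambda_r(C_S) \neq \lambda_{r+1}(C_S)$), the rank-$r$ projection is $\Pi_S = \sum_{(k,j) \in I_\mu} u_{k,j} \otimes u_{k,j}$.

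With this eigenbasis in hand, I would write $\| \Pi_S \sum_n \sigma_n \bPhi(x_n)\|^2 = \sum_{(k,j) \in I_\mu} \iprod{u_{k,j}}{\sum_n \sigma_n \bPhi(x_n)}^2$ by orthonormality, and then evaluate each inner product. Because $u_{k,j} \in \ov\Hset_k$ while $\bPhi(x_n) = \sum_l \bPhi_l(x_n)$, all cross terms with $l \neq k$ vanish by orthogonality, leaving $\iprod{u_{k,j}}{\sum_n \sigma_n \bPhi(x_n)} = \sum_n \sigma_n u_{k,j}(x_n)$. Using the closed form of the eigenfunction, $u_{k,j}(x) = \frac{\sqrt{\mu_k}}{\sqrt{\lambda_j(\ovK_k)\, m}} \sum_i K_k(x, x_i)[\bv_{k,j}]_i$ (as in \citep[Eq.~18]{blanchard2008finite}), together with the normalization $[\ovK_k]_{n,i} = \frac{1}{m} K_k(x_n, x_i)$ and the eigen-relation $\ovK_k \bv_{k,j} = \lambda_j(\ovK_k)\, \bv_{k,j}$, the inner sum collapses: $\sum_i K_k(x_n, x_i)[\bv_{k,j}]_i = m\, \lambda_j(\ovK_k)[\bv_{k,j}]_n$. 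Substituting and simplifying the scalar factors yields $\iprod{u_{k,j}}{\sum_n \sigma_n \bPhi(x_n)} = \sqrt{m\, \mu_k \lambda_j(\ovK_k)}\; (\bv_{k,j}^\top \ssigma)$.

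Finally, I would square this expression and sum over $(k,j) \in I_\mu$ to obtain $\| \Pi_S \sum_n \sigma_n \bPhi(x_n)\|^2 = m \sum_{(k,j) \in I_\mu} \mu_k \lambda_j(\ovK_k)(\bv_{k,j}^\top \ssigma)^2$, which is exactly $m\, \bu_\mu \cdot \bu_\sigma$ by the definitions of $\bu_\mu$ and $\bu_\sigma$. I expect the main obstacle to be the first step, namely rigorously justifying that the spectrum of $C_S$ decomposes as the disjoint union of the spectra of the $C_{S,k}$ with eigenvalues $\mu_k \lambda_j(\ovK_k)$. This is where the linear independence assumption is essential: it is precisely what renders the subspaces $\ov\Hset_k$ mutually orthogonal, so that the $C_{S,k}$ have orthogonal ranges and can be diagonalized simultaneously. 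Once that structural fact is in place, the remaining inner-product computation is a routine application of the reproducing property and the eigenvector identity for $\ovK_k$.
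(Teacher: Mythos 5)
Your proposal is correct and follows essentially the same route as the paper's proof: decompose $\ov\Hset$ into the orthogonal subspaces $\ov\Hset_k$ via linear independence, identify the eigenfunctions of $C_S$ with those of the individual $C_{S,k}$, invoke the closed form of the eigenfunctions from \citep{blanchard2008finite}, and collapse the inner products using the eigen-relation for $\ovK_k$. If anything, your explicit verification that $C_{S,l}$ annihilates $u_{k,j}$ for $l \neq k$ makes the spectral decomposition step more rigorous than the paper's, which simply asserts it from the direct-sum structure.
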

\begin{proof}
By properties of orthogonal projection, $\| \Pi_{S} \sum_{n = 1}^m \sigma_n \bPhi(x_{n}) \|^{2}$ can be expressed as $\sum_{i=1}^{r}\| P_{\lambda_{i}}(C_{S}) \sum_{n = 1}^m \sigma_n \bPhi(x_{n}) \|^{2}$, where $P_{\lambda_{i}}(C_{S})$ is the orthogonal projection onto the eigenfunction that corresponds to $\lambda_{i}(C_{S})$. Recall from Section~\ref{sec:Learning scenario} that when base kernels are linearly
independent with respect to sample $S$, then $\ov
\Hset=\bigoplus_{k=1}^{p}\ov\Hset_k$, which implies that for every
$i \in [1,m]$ there exists a $(k,j)$ such that the eigenfunction of
$\lambda_{i}(C_{S})$ is equal to the eigenfunction of
$\mu_k\lambda_{j}(C_{S,k})$. Note that $j$ may not
be equal to $i$ since $\Mu$ influences the ordering of eigenvalues.
Thus, we define the indexing set $I_{\Mu}$ that
contains the $r$ pairs of indices $(k,j)$ that correspond to the $r$
largest eigenvalues in
$\{\mu_k\lambda_{j}(C_{S,k})\}_{k,j}$ for particular value of $\Mu$.

For any $f \in \ov \Hset$ we can express $\| P_{\lambda_{j}}(C_{S}) f
\|^{2}=\langle u_{j}, f
\rangle^{2}$, where $u_{j}$ is the eigenfunction corresponding to
$\lambda_{j}(C_{S})$. Since
$\ov \Hset=\bigoplus_{k=1}^{p}\ov \Hset_k$ we can express the norm of projection as $\| \Pi_{S} f \|^{2}=\sum_{(k,j) \in I_{\Mu}}\langle u_{k,j}, f \rangle^{2}$, where $u_{k,j}$ is an eigenfunction of $C_{S,k}$ with eigenvalue
$\lambda_{j}(C_{S,k})$. When $f=\sum_{n = 1}^m \sigma_n \bPhi(x_{n})$
observe that $u_{k,j}$ belongs to orthogonal component $\ov
\Hset_k$, therefore it suffices to take inner product in $\ov
\Hset_k$, which by the reproducing property is equal to $u_{k,j}(x_{n})$. By \cite[Equation 18]{blanchard2008finite} $u_{k,j}(x_{n})$ takes the form
\begin{equation}
  u=\sqrt{\frac{\mu_k}{\lambda_{j}(\ovK_k) m }}\sum_{i=1}^{m}K_k(\cdot,x_{i})[\bv_{k,j}]_{i} \,,
\end{equation} which results in the following series of equalities
\begin{align*}
& \langle u_{k,j}, \sum_{n = 1}^m \sigma_n \bPhi(x_{n}) \rangle =\\
& = \sum_{n = 1}^m \sigma_n\langle u_{k,j}, \bPhi(x_{n})\rangle \\
& =\sqrt{m} \sqrt{\frac{
\mu_k}{\lambda_{j}(\ovK_k)}} \sum_{n = 1}^m \sigma_n  \sum_{i=1}^{m}\big[\bv_{k,j} \big]_{i}\frac{1}{m}K_k(x_{n},x_{i})  \\
& =\sqrt{m} \sqrt{\frac{
\mu_k}{\lambda_{j}(\ovK_k)}} \lambda_{j}(\ovK_k) \sum_{n = 1}^m \sigma_n\big[\bv_{k,j} \big]_{n} \\
& =\sqrt{m}\sqrt{
\mu_k\lambda_{j}(\ovK_k)} \bv_{k,j}^{\top}\ssigma \,.
\end{align*}
Squaring the terms above and summing them up, we arrive at
%\begin{equation}
$
\| \Pi_{S} \sum_{n = 1}^m \sigma_n \bPhi(x_{n}) \|^{2} =m
\sum_{(k,j) \in
I_{\Mu}}\mu_k\lambda_{j}(\ovK_k)(\bv_{k,j}^{\top}\ssigma)^{2}=m\bu_{\mu} \cdot \bu_{\sigma}
$
%\end{equation}
which complete the proof. \qed
\end{proof}

\end{document}